\newcolumntype{d}[1]{D{.}{.}{#1}}
\definecolor{primaryblue}{HTML}{0066CC}
\definecolor{accentcyan}{HTML}{00D4AA}
\definecolor{warmorange}{HTML}{FF6B35}
\definecolor{deepgray}{HTML}{2C3E50}
\definecolor{lightgray}{HTML}{F8F9FA}
\definecolor{gradientstart}{HTML}{667eea}
\definecolor{gradientend}{HTML}{764ba2}
\definecolor{citecolor}{HTML}{0071bc}
\definecolor{citeblue}{RGB}{0, 113, 188}
\definecolor{linkcolor}{HTML}{9A4D92}
\definecolor{firebrick}{rgb}{0.698,0.133,0.133}
\definecolor{paleviolet}{HTML}{E1EEFC}
\definecolor{CarolinaUltraLight}{HTML}{E7F4FC}
\definecolor{lightgrey}{RGB}{247, 247, 247}
\definecolor{shadecolor}{HTML}{EFEFEF}
\definecolor{lightyellow}{rgb}{1.0, 0.95, 0.7}
\definecolor{lightblue}{rgb}{0.90, 0.95, 1.0}
\definecolor{light-gray}{gray}{0.95}
\definecolor{darkgrey}{rgb}{0.5, 0.5, 0.5}
\definecolor{darkgreen}{rgb}{0, 0.5, 0}
\definecolor{mydarkblue}{rgb}{0,0.08,0.45}
\definecolor{mydarkblue2}{rgb}{0.133, 0.133, 0.698}
\definecolor{echodrk}{HTML}{0099cc}
\definecolor{mymauve}{rgb}{0.58,0,0.82}
\definecolor{midnightblue}{rgb}{0.1,0.1,0.44}
\definecolor{oxfordblue}{rgb}{0.0,0.13,0.28}
\definecolor{prussianblue}{rgb}{0.0,0.19,0.33}
\definecolor{coolteal}{rgb}{0, 0.45, 0.45}
\definecolor{olive}{rgb}{0.1, 0.3, 0}
\definecolor{mypurple}{rgb}{0.5,0,0.5}
\definecolor{almond}{rgb}{0.94, 0.87, 0.8}
\definecolor{blue_ampEncoding}{HTML}{DAE8FC}
\definecolor{green_encoder}{HTML}{D5E8D4}
\definecolor{purple_decoder}{HTML}{E1D5E7}
\definecolor{yellow_measure}{HTML}{FFF2CC}
\definecolor{gray_block}{HTML}{F5F5F5}
\definecolor{pink_dru}{HTML}{FAD9D5}
\definecolor{orange_v}{HTML}{FAD7AC}
\definecolor{colorA}{rgb}{1,0,0}
\definecolor{colorB}{rgb}{0,0.3,1}
\definecolor{colorC}{rgb}{0.9,0.8,0.2}
\definecolor{colorD}{rgb}{0,0.65,0}
\definecolor{lesslightgray}{rgb}{0.5,0.5,0.5}
\definecolor{fundamental}{RGB}{55, 110, 111}
\definecolor{Gred}{RGB}{219, 50, 54}
\definecolor{ToCgreen}{RGB}{0, 128, 0}
\definecolor{Sepia}{RGB}{112, 66, 20}
\definecolor{Dblue}{rgb}{0,0.08,0.45}
\definecolor{Blue}{rgb}{0, 0, 0.8}
\definecolor{blue}{rgb}{0,0,1}
\definecolor{UNCblue!10}{rgb}{0.84,0.91,0.98}
\definecolor{RowAlt}{rgb}{0.98,0.98,0.99}
\definecolor{CarolinaBlue}{HTML}{7BAFD4}        
\definecolor{CarolinaLightBlue}{HTML}{B3D4E5}   
\definecolor{CarolinaUltraLight}{HTML}{E8F4F8}  
\definecolor{CarolinaText}{HTML}{1C2B33}        
\titlespacing\section{0pt}{4pt plus 4pt minus 2pt}{-2pt plus 2pt minus 2pt}
\titlespacing\subsection{0pt}{2pt plus 4pt minus 2pt}{-2pt plus 2pt minus 2pt}
\titlespacing\subsubsection{0pt}{2pt plus 4pt minus 2pt}{-2pt plus 2pt minus 2pt}
\def\th@remark{%
  \thm@headfont{\bfseries}%
  \normalfont 
  \thm@preskip\topsep \divide\thm@preskip\tw@
  \thm@postskip\thm@preskip
}
\theoremstyle{definition}
\newtheorem{theorem}{Theorem}[section]
\newtheorem{lemma}{Lemma}[section]
\newtheorem{corollary}{Corollary}[theorem]
\newtheorem{proposition}{Proposition}[section]
\newtheorem{assumption}{Assumption}[section]
\newtcolorbox{titleblock}{
  enhanced,
  frame hidden,
  colback=CarolinaUltraLight,
  colframe=CarolinaUltraLight,
  boxrule=0pt,
  arc=10pt,
  left=14pt,
  right=14pt,
  top=14pt,
  bottom=14pt,
  width=\linewidth,
  before skip=12pt plus 4pt,
  after skip=12pt plus 4pt,
  grow to left by=1.5pt,
  grow to right by=1.5pt,
  before upper={
    \setlength{\parindent}{0cm}
    \setlength{\parskip}{0.5cm}
  }
}
\crefname{theorem}{Theorem}{Theorems}
\crefname{proposition}{Proposition}{Propositions}
\crefname{lemma}{Lemma}{Lemmas}
\crefname{corollary}{Corollary}{Corollaries}
\crefname{definition}{Definition}{Definitions}
\crefname{assumption}{Assumption}{Assumptions}
\crefname{remark}{Remark}{Remarks}
\crefname{problem}{Problem}{Problems}
\crefname{property}{Property}{property}
\crefname{question}{Question}{Questions}
\numberwithin{equation}{section}
\numberwithin{theorem}{section}
\numberwithin{proposition}{section}
\numberwithin{definition}{section}
\numberwithin{lemma}{section}
\numberwithin{assumption}{section}
\numberwithin{remark}{section}
\newcommand\metadataformat[2][]{{\small {\bfseries #1:} #2}}
\def\1{\bm{1}}
\let\save@mathaccent\mathaccent
\newcommand*\if@single[3]{%
    \setbox0\hbox{${\mathaccent"0362{#1}}^H$}%
    \setbox2\hbox{${\mathaccent"0362{\kern0pt#1}}^H$}%
    \ifdim\ht0=\ht2 #3\else #2\fi
}
\newcommand*\rel@kern[1]{\kern#1\dimexpr\macc@kerna}
\newcommand*\widebar[1]{\@ifnextchar^{{\wide@bar{#1}{0}}}{\wide@bar{#1}{1}}}
\newcommand*\wide@bar[2]{\if@single{#1}{\wide@bar@{#1}{#2}{1}}{\wide@bar@{#1}{#2}{2}}}
\newcommand*\wide@bar@[3]{%
    \begingroup
    \def\mathaccent##1##2{%
        \let\mathaccent\save@mathaccent
        \if#32 \let\macc@nucleus\first@char \fi
        \setbox\z@\hbox{$\macc@style{\macc@nucleus}_{}$}%
        \setbox\tw@\hbox{$\macc@style{\macc@nucleus}{}_{}$}%
        \dimen@\wd\tw@
        \advance\dimen@-\wd\z@
        \divide\dimen@ 3
        \@tempdima\wd\tw@
        \advance\@tempdima-\scriptspace
        \divide\@tempdima 10
        \advance\dimen@-\@tempdima
        \ifdim\dimen@>\z@ \dimen@0pt\fi
        \rel@kern{0.6}\kern-\dimen@
        \if#31
        \overline{\rel@kern{-0.6}\kern\dimen@\macc@nucleus\rel@kern{0.4}\kern\dimen@}%
        \advance\dimen@0.4\dimexpr\macc@kerna
        \let\final@kern#2%
        \ifdim\dimen@<\z@ \let\final@kern1\fi
        \if\final@kern1 \kern-\dimen@\fi
        \else
        \overline{\rel@kern{-0.6}\kern\dimen@#1}%
        \fi
    }%
    \macc@depth\@ne
    \let\math@bgroup\@empty \let\math@egroup\macc@set@skewchar
    \mathsurround\z@ \frozen@everymath{\mathgroup\macc@group\relax}%
    \macc@set@skewchar\relax
    \let\mathaccentV\macc@nested@a
    \if#31
    \macc@nested@a\relax111{#1}%
    \else
    \def\gobble@till@marker##1\endmarker{}%
    \futurelet\first@char\gobble@till@marker#1\endmarker
    \ifcat\noexpand\first@char A\else
    \def\first@char{}%
    \fi
    \macc@nested@a\relax111{\first@char}%
    \fi
    \endgroup
    }
\let\bar\widebar
\DeclareMathAlphabet{\mathsfit}{\encodingdefault}{\sfdefault}{m}{sl}
\SetMathAlphabet{\mathsfit}{bold}{\encodingdefault}{\sfdefault}{bx}{n}
\newcommand{\appcontents}{%
  \clearpage
  \phantomsection
  \startcontents[app]
  \printcontents[app]{l}{1}{\section*{Appendix Contents}}%
}
\begin{document}

\makeatletter
\def\blfootnote{\gdef\@thefnmark{}\@footnotetext}
\makeatother

\makeatletter
\pagestyle{fancy}
\fancyhf{}
\renewcommand{\headrulewidth}{1pt}
\chead{\small\bf \texttt{DOGe}: Defensive Output Generation for LLM Protection Against Knowledge Distillation
}
\cfoot{\thepage}
\thispagestyle{fancy}
\makeatother

\makeatletter
\def\icmldate#1{\gdef\@icmldate{#1}}
\icmldate{\today}
\makeatother

\makeatletter
\fancypagestyle{fancytitlepage}{
  \fancyhead{}
  \lhead{\includegraphics[height=0.8cm]{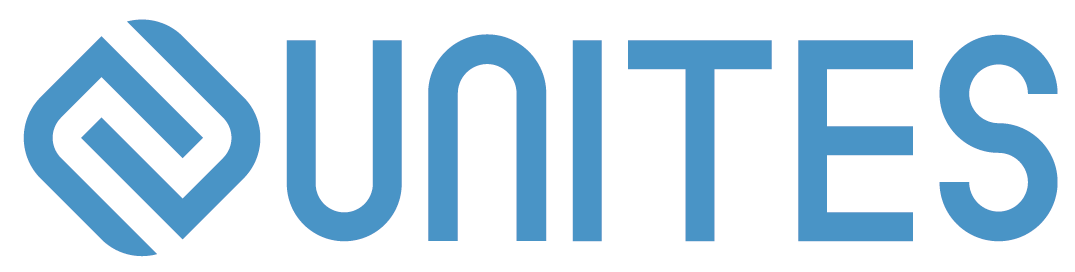}}
  \rhead{\it \@icmldate}
  \cfoot{}
}
\makeatother

\thispagestyle{fancytitlepage}

\vspace*{0.5em}

\noindent
\begin{titleblock}
    {\setlength{\parskip}{0cm}
     \raggedright
     {\setstretch{1.2}
      \LARGE\sffamily\bfseries
      
      \par}
    }
    \vskip 0.2cm
    
    \begin{icmlauthorlist}
\mbox{Pingzhi Li$^{\dagger\,1}$},
\mbox{Zhen Tan$^{\dagger\,2}$}, 
\mbox{Mohan Zhang$^1$}, 
\mbox{Huaizhi Qu$^1$}, \\
\mbox{Huan Liu$^2$},
and \mbox{Tianlong Chen$^1$}
\end{icmlauthorlist}

$^{1\,}$UNC-Chapel Hill 
\quad $^{2\,}$Arizona State University

$^{\dagger}$ Equal Contribution 
    \vskip 0.2cm
    
    Large Language Models~(LLMs) represent substantial intellectual and economic investments, yet their effectiveness can inadvertently facilitate model imitation via knowledge distillation~(KD). In practical scenarios, competitors can distill proprietary LLM capabilities by simply observing publicly accessible outputs, akin to reverse-engineering a complex performance by observation alone. Existing protective methods like watermarking only identify imitation post-hoc, while other defenses assume the student model mimics the teacher's internal logits, rendering them ineffective against distillation purely from observed output text. This paper confronts the challenge of actively protecting LLMs within the realistic constraints of API-based access. We introduce an effective and efficient \textbf{Defensive Output Generation} (\texttt{DOGe}) strategy that subtly modifies the output behavior of an LLM. Its outputs are accurate and useful for legitimate users, yet are designed to be \textit{misleading for distillation}, significantly undermining imitation attempts. We achieve this by fine-tuning only the final linear layer of the teacher LLM with an adversarial loss. This targeted training approach anticipates and disrupts distillation attempts during inference time. Our experiments show that, while preserving the performance of the teacher model, student models distilled from the defensively generated outputs demonstrate catastrophically reduced performance, demonstrating \texttt{DOGe} as a practical safeguard against KD-based model imitation.
    
    \vskip 0.2cm
    {\setlength{\parskip}{0cm}
     \centering
     \makebox[\linewidth]{
        \metadataformat[Models]{
            \href{https://huggingface.co/unites-lab/doge-anti-distillation}{https://huggingface.co/unites-lab/doge-anti-distillation}
        }
     }
     \makebox[\linewidth]{
        \metadataformat[Code]{
            \href{https://github.com/unites-lab/doge}{https://github.com/unites-lab/doge}
        }
     }
    }
\end{titleblock}

\blfootnote{%
$^{\textrm{\Letter}}$ Correspondence email: \{pingzhi, tianlong\}@cs.unc.edu
\\[2.5em]
\ifcsname @icmlpreprint\endcsname
  \textit{\csname @icmlpreprint\endcsname}%
\fi
}

\section{Introduction}
\label{sec:intro}
Large Language Models (LLMs) have become pivotal to advancements across diverse applications, including text generation, reasoning, and interactive assistants \citep{brown2020language, touvron2023llama}. Developing these powerful models involves considerable economic resources, specialized technical knowledge, and extensive computational investments, rendering them valuable intellectual property. Ironically, the very success of LLMs presents a vulnerability: their publicly accessible API outputs can be exploited through knowledge distillation (KD)~\citep{hinton2015distilling}, allowing competitors to cheaply imitate proprietary model capabilities \citep{tramer2016stealing, huang2022large}. Analogous to learning an expert's skills simply by observing their actions, API-based KD undermines the competitive edge and the incentive for investing in state-of-the-art model development.

\begin{figure}[t]
    \centering
    \includegraphics[width=0.98\linewidth]{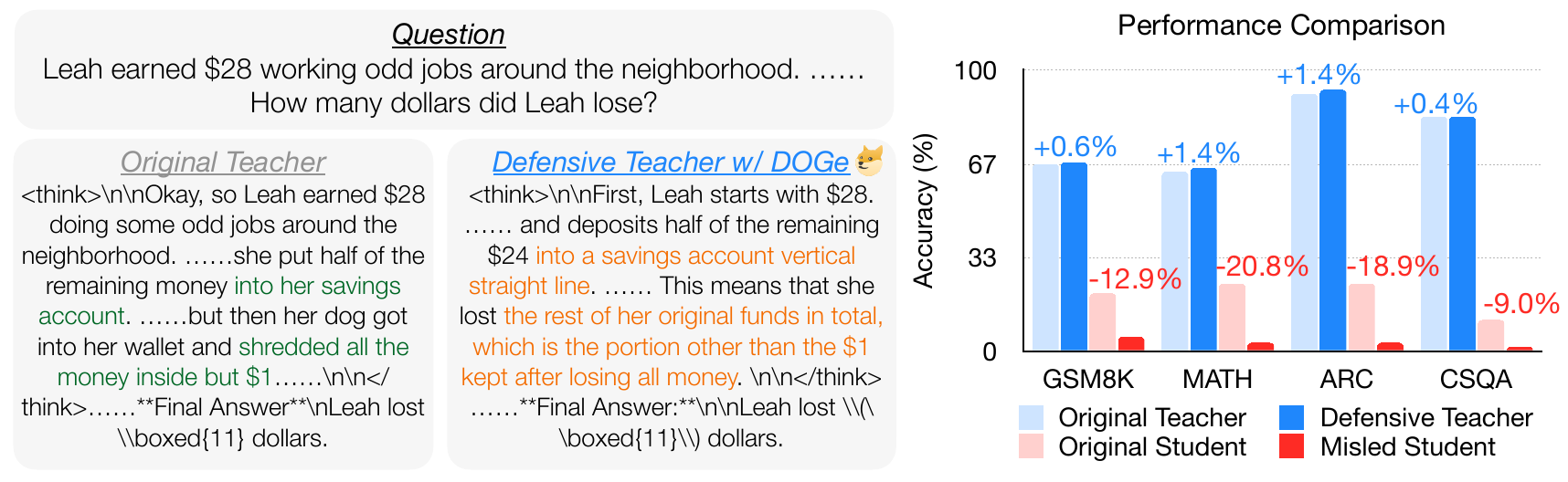}
    \vspace{-3mm}
    \caption{\uline{Left:} Example of defensive output generation showing how the defensive teacher with \texttt{DOGe} subtly alters reasoning steps by introducing hard-to-follow reasoning while still arriving at the correct final answer. \uline{Right:} Performance comparison between original and \textit{defensive} teachers, original and \textit{misled} (distilled from defensive teacher) students, showing \texttt{DOGe} maintains or improves teacher performance while significantly degrading student model accuracy across $4$ benchmarks. Here we employ \texttt{Qwen3-8B} as the teacher model, \texttt{Llama-3.2-1B} as the student model.}
    \label{fig:teaser}
    \vspace{-30pt}
\end{figure}
Current defenses are limited in scope and practicality. Watermarks \citep{kirchenbauer2023watermark,liang2024watermarking} and fingerprints \citep{he2022protecting,xu2024instructional} provide only post-hoc detection, akin to security cameras that capture theft but do not prevent it.
Other active defense strategies \citep{ma2021undistillable, savani2025antidistillation} operate by modifying internal model states or assume the distillation process involves mimicking the teacher's predicted vocabulary logits \citep{hinton2015distilling}. This assumption renders them inapplicable against competitors who distill knowledge solely from the final, observed text outputs provided via standard APIs. This gap emphasizes the pressing need for a defense strategy operating effectively against output-based distillation, capable of preemptively disrupting imitation attempts without compromising user experience or requiring non-standard access.

In response, we propose a novel defense mechanism termed \texttt{DOGe}~(\textbf{Defensive Output Generation}). Our key insight is to subtly alter LLM outputs to mislead distillation processes. The goal is to generate outputs that remain accurate and coherent for legitimate users, yet are \textit{misleading for distillation}, significantly undermining imitation attempts. Drawing inspiration from adversarial learning \citep{goodfellow2014explaining}, our approach involves adversarially fine-tuning only the final linear layer of the teacher LLM. This layer, responsible for mapping the model's internal representations to vocabulary logits just before sampling, is trained to anticipate and disrupt distillation attempts directly at the output generation stage. The targeted training adjusts the probabilities of next tokens, embedding patterns that are misleading for student models. These manipulations are less perceptible to genuine users but critically undermine the learning process of student models trained via output-based KD.

Our approach offers several practical advantages. Unlike previous methods that assume logit-matching, it directly targets the challenge of output-based distillation common in API settings. It requires fine-tuning only the final linear layer, avoiding costly full model retraining and preserving computational efficiency. Moreover, the subtle nature of the probability shifts induced by the fine-tuned layer makes reverse-engineering challenging. Figure~\ref{fig:teaser} demonstrates our scope and outcome.

The primary contributions of this paper are: (\textbf{\textit{i}})
Formalizing \textit{defensive output generation} as a novel framework for protecting proprietary LLM outputs against imitation. We frame this problem as a \underline{dual-objective optimization}, explicitly modeling both objectives of maintaining utility for legitimate users while maximizing difficulty for imitation via distillation.
(\textbf{\textit{ii}}) Introducing an adversarially fine-tuned final linear layer that implements this defense practically, requiring only \underline{lightweight modification} without costly retraining or intrusive internal model access assumptions.
(\textbf{\textit{iii}}) Demonstrating empirically that this defensive strategy \underline{significantly degrades the performance} of student models attempting output-based distillation, while preserving or even improving the teacher's utility for its intended tasks.
(\textbf{\textit{iv}}) Providing \underline{theoretical insights} into why the proposed subtle modifications to the final layer's output distribution effectively disrupt distillation.

\section{Related Work}
\label{sec:related_work}

\textbf{Knowledge Distillation.}
Knowledge distillation (KD) \citep{hinton2015distilling, gou2021knowledge, xu2024survey} aims to transfer knowledge from a large teacher model ($T$) to a smaller student model ($S$). Techniques vary based on the knowledge source: logits \citep{hinton2015distilling,kim2018paraphrasing,ba2014deep,mirzadeh2020improved}, intermediate features \citep{chen2021distilling,romero2014fitnets,huang2017like,zhou2018rocket}, or generated outputs \citep{west2021symbolic,vicuna2023,zelikman2022star,kim2016sequence,alpaca}. Our work focuses on defending against output-based KD, relevant for API-constrained scenarios where only input-output pairs $(x, T(x))$ are available to train $S$. Our method can also be applied to ligits-based KD.

\textbf{Model IP Protection.}
Protecting the IP of machine learning models is a growing concern \citep{sun2023deep,vsarvcevic2024u,jiang2024intellectual,liang2024watermarking}. Watermarking \citep{liang2024watermarking,wan2022comprehensive,hosny2024digital,zhong2023brief} embeds identifiable patterns into model outputs or parameters for detection, but cannot directly prevent copying knowledge from the output. Model fingerprinting aims to identify models uniquely \citep{guan2022you,yu2021artificial,peng2022fingerprinting}. Model extraction attacks \citep{liang2024model,zhang2021thief,jiang2023comprehensive,takemura2020model} attempt to steal model functionality, with KD being a primary vector. Defenses against extraction often assume white-box access or focus on specific query types \citep{jiang2023comprehensive,chen2023d,gong2021model,tang2024modelguard}, whereas our goal is proactive prevention via output manipulation against general KD.

\textbf{Adversarial Machine Learning.}
Our work shares conceptual similarities with adversarial machine learning \citep{huang2011adversarial,kurakin2016adversarial,vorobeychik2018adversarial,kumar2020adversarial,li2018security}, which adversarially modifies the input to degrade a model's inference performance. However, instead of crafting adversarial inputs to fool a fixed model's prediction, we modify the \emph{training} of the teacher model to generate outputs that ``mislead'' the \emph{learning process} of the student during distillation. Some works explore adversarial attacks on KD \citep{cui2020substitute,hong2023knowledge,ge2021anti}, but typically from the perspective of an attacker degrading a specific student, not a defender making the teacher inherently hard to distill.

\textbf{Controllable Text Generation and Stylometry.}
Techniques for controlling LLM output style \citep{liu2024step,tao2024cat}, complexity \citep{Nguyen2024MultiObjectiveLCA,Hsu2024FreetextRGA}, or other attributes are relevant if the defense mechanism involves generating outputs with specific linguistic properties (\textit{e.g.}, high complexity \citep{Li2024ControlLLA,Peng2024SelfcontrollerCLA}, ambiguity \citep{Kim2024AligningLMA}, idiosyncratic style \citep{Liang2024UniversalACA}) designed to hinder student learning. \citep{savani2025antidistillationsampling} proposes a controllable text generation method specifically designed for anti-distillation. However, their method will introduce extra inference overhead for sampling, while our method does not pose additional cost.
Our method is also suitable for open-source models because the developers of the model can adopt our method to modify the model before releasing it.

\section{Problem Formulation}
\label{sec:problem}

We first define standard knowledge distillation for LLMs and then outline the general goal of anti-distillation. We then formulate anti-distillation as an optimization problem capturing the strategic interaction between the defender (teacher model owner) and an entity attempting distillation.

\subsection{Sequence-Level Knowledge Distillation (KD) for LLMs}

Let $\mathcal{T}$ be a pre-trained teacher LLM and $S$ be a student LLM, typically with smaller capacity and parameters $\theta_S$. Given a dataset $D'_{train}$, sequence-level KD involves generating a distillation dataset $D_{KD} = \{(x, y) \mid x \in D'_{train}, y = \mathcal{T}(x)\}$, where $y$ represents the output sequence generated by the teacher $\mathcal{T}$ for input $x$. A student model $S_{\theta_S}$ is then trained by minimizing a distillation loss $\mathcal{L}_{distill}(S_{\theta_S}(x), y)$ over $D_{KD}$. This loss typically aims to maximize the likelihood of the student generating the teacher's output sequence $y$ given the input $x$ (\textit{e.g.}, using cross-entropy loss token by token). The goal is to find optimal student parameters $\theta_S^*$ that transfer the capabilities of $\mathcal{T}$ to $S_{\theta_S^*}$.

\subsection{The Goal of Anti-Distillation for LLMs}
\label{sec:goal_anti_distill}

The objective of anti-distillation, or achieving distillation resistance, is to create a modified teacher model $\mathcal{T}^*$ that actively hinders the effectiveness of KD. Specifically, the goal is twofold:

\textbf{(1) Teacher Performance Preservation:} The modified teacher $\mathcal{T}^*$ should maintain high performance on its intended downstream tasks $\tau$. Let $\mathrm{Perf}(\mathcal{M}, D_{eval}, \tau)$ be the performance metric of a model $\mathcal{M}$ on an evaluation set $D_{eval}$ for task $\tau$. We require $\mathrm{Perf}(\mathcal{T}^*, D_{eval}, \tau) \ge \mathrm{Perf}(\mathcal{T}_{base}, D_{eval}, \tau) - \epsilon$, where $\mathcal{T}_{base}$ is the original baseline teacher and $\epsilon$ is a small tolerance.

\textbf{(2) Student Performance Degradation:} For \emph{any} student architecture $S$ trained via sequence-level KD using outputs from $\mathcal{T}^*$ (resulting in an optimally distilled student $S^*_{KD}$), its performance $\mathrm{Perf}(S^*_{KD}, D_{eval}, \tau)$ should be significantly lower than the performance $\mathrm{Perf}(S_{KD}, D_{eval}, \tau)$ achieved by the same student architecture $S$ distilled from the original teacher $\mathcal{T}_{base}$. That is, $\mathrm{Perf}(S^*_{KD}, D_{eval}, \tau) \ll \mathrm{Perf}(S_{KD}, D_{eval}, \tau)$.
This resistance should be achieved under the constraint that only the teacher's outputs $y = \mathcal{T}^*(x)$ are available to the party performing the distillation.

\subsection{Formalizing Anti-Distillation as A Dual-objective Optimization Problem}
\label{sec:min_max_formulation}

We can frame the defender's goal as a dual-objective optimization problem. The defender controls the teacher's LM head parameters, $\theta_{final}$, to create a modified teacher $\mathcal{T}_{\theta_{final}}$. The objective is to find parameters $\theta_{final}^*$ that maximize the teacher's own performance while anticipating and minimizing the performance of a student model that is subsequently distilled from its outputs.

Let $\mathrm{Perf}_T(\mathcal{T}_{\theta_{final}})$ denote the teacher's performance. The performance of an optimally distilled student, $\mathrm{Perf}_S(S_{\theta_S^*})$, depends on the defender's choice of $\theta_{final}$, since the student is trained on the dataset $D_{KD}(\theta_{final})$ generated by $\mathcal{T}_{\theta_{final}}$. The defender's optimization problem is expressed as:
{\begin{equation}
\label{eq:min_max_objective}
\theta_{final}^* = \arg\max_{\theta_{final}} \left[ \mathrm{Perf}_T(\mathcal{T}_{\theta_{final}}) - \lambda \cdot \mathrm{Perf}_S\left(S_{\arg\min_{\theta_S} \mathcal{L}_{distill}(\theta_S; D_{KD}(\theta_{final}))}\right) \right].
\end{equation}} 
The inner $\arg\min$ term shows the student's distillation process, and the outer $\arg\max$ represents the defender's goal of finding the best trade-off, balanced by the hyperparameter $\lambda > 0$. Solving this nested optimization directly is intractable. Section~\ref{sec:method} presents a practical approximative solution.

\section{Defensive Output Generation~(\texttt{DOGe})}
\label{sec:method}

\begin{wrapfigure}{r}{0.5\textwidth}
    \centering
    \vspace{-23pt}
    \includegraphics[width=0.99\linewidth]{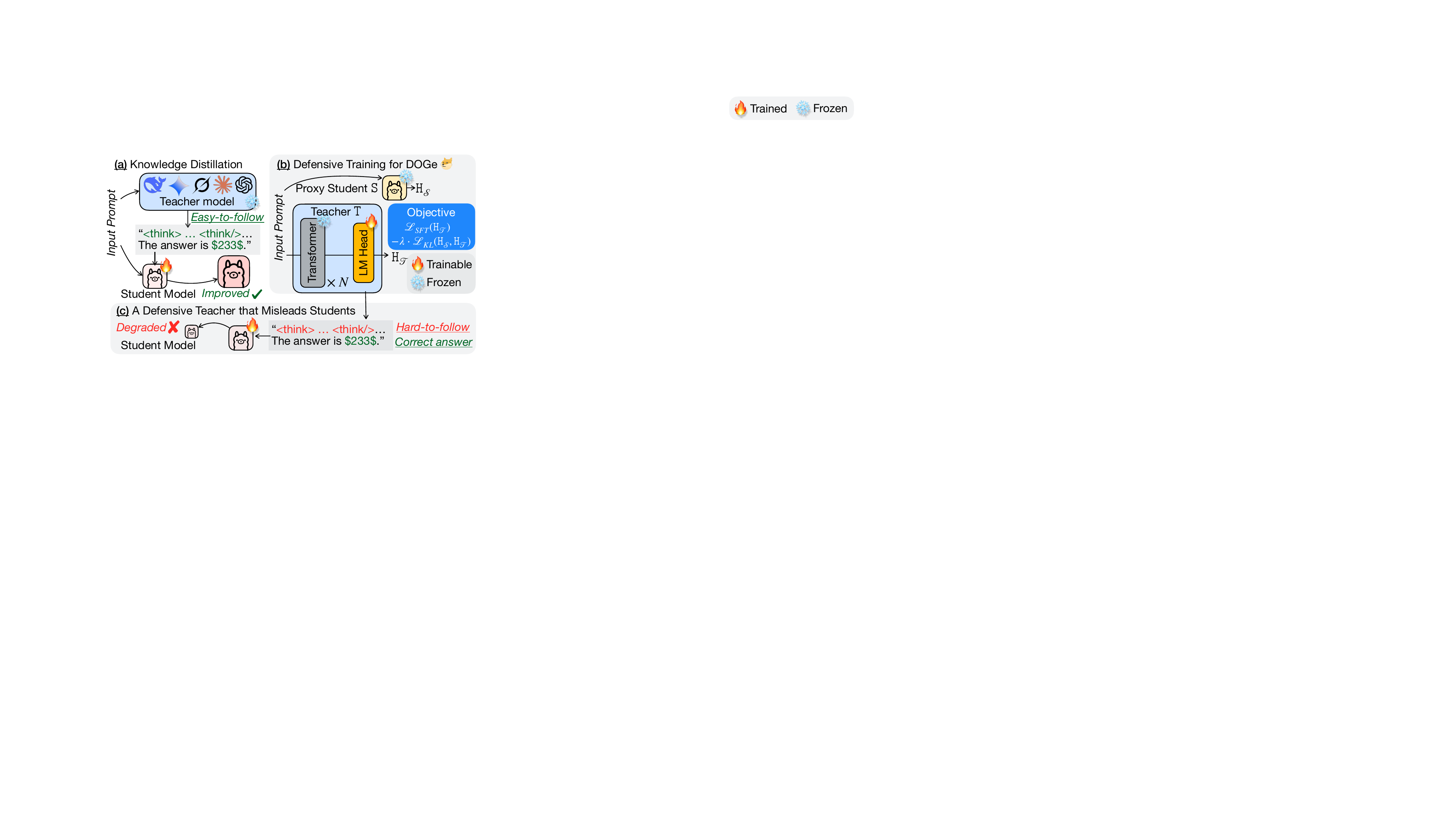}
    \caption{\uline{(a)}~KD process where a student model improves by learning from a teacher model's easy-to-follow reasoning patterns and outputs. \uline{(b)}~Defensive Training mechanism of \texttt{DOGe}, which trains the teacher model's LM head using the objective that preserves task performance while maximizing KL-divergence from proxy student outputs. \uline{(c)}~The Defensive teacher misleads the student while generating correct answers, as the modified reasoning becomes hard to follow.}
    \label{fig:main}
    \vspace{-25pt}
\end{wrapfigure}
To approximate the solution to the optimization problem above, we propose \textbf{Defensive Output Generation} (\texttt{DOGe}). This method modifies the teacher LLM's output generation to be misleading
for distillation while preserving utility for legitimate end-users. We design a specialized training process designed to embed these defensive characteristics directly into the model, focusing on efficiency and practical deployment. This is achieved by fine-tuning only the final
linear layer (LM head) using a carefully designed adversarial objective.
The overview of the framework is given in Figure~\ref{fig:main}.

\subsection{The Training Objective}
\label{sec:training_objective}

\textbf{Adversarial Defensive Training.} The central goal of our defensive training is to optimize the teacher model $\mathcal{T}$ to balance two objectives: maintaining its original task performance and degrading the performance of student models distilled from its outputs. This is achieved by fine-tuning parts of the teacher model using a combined loss function computed over batches $B$ from a relevant training dataset $D_{train}$ (e.g., a dataset representative of the target task). The loss $\mathcal{L}_{total}$ is:
{\begin{equation}
\label{eq:combined_loss}
\mathcal{L}_{total} = \mathcal{L}_{SFT} + \lambda \cdot \mathcal{L}_{adv}.
\end{equation}} 
Here, $\mathcal{L}_{SFT}$ is a standard supervised fine-tuning loss ensuring the teacher maintains its performance, and $\mathcal{L}_{adv}$ is an adversarial loss designed to degrade the performance of a student model attempting distillation. $\lambda$ is a hyperparameter controlling the trade-off.

The supervised fine-tuning loss, $\mathcal{L}_{SFT}$, is typically the cross-entropy loss between the teacher model's predictions and the ground-truth labels $y_{true}$ for the sequences in the batch $B$. This encourages the teacher model $\mathcal{T}$ to produce accurate outputs according to the training data.

The adversarial loss, $\mathcal{L}_{adv}$, is designed to make the teacher's output distribution difficult for a student to learn from. To achieve this, we aim to \emph{maximize} the statistical divergence between the teacher's output distribution and that of one or more fixed \textbf{proxy student models} $\{S_{proxy_i}\}_{i=1}^N$. We define the adversarial loss as the \emph{negative} average KL divergence. Minimizing this term during training thus maximizes the divergence. Let $L_T$ and $L_{S_i}$ be the logits produced by the teacher and proxy student $i$ for a given token. The loss is:
\begin{equation}
\label{eq:adv_loss}
\mathcal{L}_{adv} = - \frac{1}{N} \sum_{i=1}^N \text{KL}\left(\text{softmax}\left(\frac{L_T}{\alpha}\right) \bigg\| \text{softmax}\left(\frac{L_{S_i}}{\alpha}\right)\right),
\end{equation}
where $\alpha$ is the temperature parameter. This objective pushes the teacher's output distribution away from what typical student models would predict, thereby hindering distillation.

\textbf{On the Stability of Maximizing KL Divergence.} We acknowledge that maximizing the forward KL divergence, $KL(P\|Q)$, can be an unstable training objective, as the loss can become infinite if $Q(x)=0$ for any $x$ where $P(x) > 0$. However, in practice, several factors mitigate this instability. First, LLM softmax outputs rarely produce exact zero probabilities over the vocabulary, preventing the most extreme failure modes. Second, the overall objective includes the strong regularizing effect of the $\mathcal{L}_{SFT}$ term, which anchors the distribution to the ground-truth data. Finally, the trade-off hyperparameter $\lambda$ is essential for balancing defensive strength and training stability, as demonstrated in our ablation studies (Section~\ref{sec:ablations}).

\textbf{Reasoning-Aware Masking.} A key aspect of \texttt{DOGe} is not just degrading distillability, but doing so without harming the utility of the answer. This introduces a deliberate \textbf{trade-off}: balanced by $\lambda$, we sacrifice the clarity and simplicity of the intermediate reasoning steps to protect the model's intellectual property. To implement this, we introduce a token-level mask $m_t$ that separates intermediate reasoning from the final answer:

{\begin{equation}
\label{eq:mask_definition}
m_t =
\begin{cases}
    1, & \text{if token } t \text{ is an intermediate/thinking token}; \\
    0, & \text{if token } t \text{ is part of the final answer}.
\end{cases}
\end{equation}} 

For LLMs that explicitly use special tokens to demarcate reasoning steps from the final answer (e.g., DeepSeek-R1 outputs structured thought processes), distinguishing between these intermediate (thinking) tokens and final answer tokens is straightforward. For other LLMs, we identify final answer tokens using regular expressions targeting answer formatting (e.g., phrases like ``Answer:'').

This mask is applied only to the adversarial component of the gradient. The effective gradient with respect to the LM head parameters:

{\begin{equation}
\label{eq:masked_gradient}
\nabla_{\theta_{final}} \mathcal{L}_{total, t} = \nabla_{\theta_{final}} \mathcal{L}_{SFT, t} + \lambda \cdot m_t \cdot \nabla_{\theta_{final}} \mathcal{L}_{adv, t}.
\end{equation}} 

This ensures that the adversarial pressure to diverge from proxy students is only applied to the reasoning process. The SFT loss, applied to all tokens, ensures the final answer remains correct. The resulting reasoning traces may become more complex, redundant, or even unnatural (as shown in Section~\ref{sec:case_study}), but this complexity is precisely the mechanism that misleads the student model. Our theoretical justification rests on the following assumption.
\begin{assumption}[Proxy Representativeness]
\label{ass:proxy_representativeness}
The proxy students $\{S_{proxy_i}\}$ effectively model the learning behavior of a general class of student models $\mathcal{S}$. Consequently, making the teacher's intermediate output distributions maximally divergent from the proxies makes them a misleading training signal for the downstream tasks of unseen student models from $\mathcal{S}$.
\end{assumption}

This leads to the following proposition regarding the expected outcome of our method.

\begin{proposition}[Student Performance Degradation]
\label{prop:student_degradation}
Given Assumption~\ref{ass:proxy_representativeness}, training a teacher's LM head $\theta_{final}$ by minimizing the loss in Eq.~\eqref{eq:combined_loss} with the masking in Eq.~\eqref{eq:masked_gradient} yields a defensive teacher $\mathcal{T}^*_{\theta_{final}}$. A student model $S \in \mathcal{S}$ distilled from $\mathcal{T}^*_{\theta_{final}}$ is expected to achieve a higher loss (and thus lower performance) on downstream tasks compared to a student distilled from a teacher trained only with $\mathcal{L}_{SFT}$.
\end{proposition}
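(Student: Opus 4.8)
The plan is to make rigorous the intuition that the masked adversarial term degrades the quality of the teacher's reasoning tokens as a training signal, while the SFT term on the answer tokens keeps the final answer correct. First I would set up notation: fix an input $x$, write the teacher's output sequence $y = (y_1,\dots,y_L)$ with the reasoning/answer mask $m_t$ from Eq.~\eqref{eq:mask_definition}, and let $p^{\mathcal{T}^*}(\cdot\mid x, y_{<t})$ denote the defensive teacher's next-token distribution at position $t$. Decompose the distillation loss the student $S\in\mathcal{S}$ incurs when trained on $D_{KD}(\mathcal{T}^*)$ into a sum over positions, $\mathcal{L}_{distill}(S) = \sum_t \mathbb{E}[\ell_t]$, and split this into a reasoning part ($m_t = 1$) and an answer part ($m_t = 0$). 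For the answer part, the argument is that since only $\mathcal{L}_{SFT}$ shapes those positions (the adversarial gradient is masked out there by Eq.~\eqref{eq:masked_gradient}), the defensive teacher's answer-token distribution is, up to the $\epsilon$ tolerance already discussed in Section~\ref{sec:goal_anti_distill}, the same as that of the SFT-only teacher; hence the student's achievable loss on answer tokens is essentially unchanged.

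The crux is the reasoning part. Here I would invoke Assumption~\ref{ass:proxy_representativeness} to connect the proxy students to the target class $\mathcal{S}$: by construction the defensive training drives $\mathrm{softmax}(L_{\mathcal{T}^*}/\alpha)$ to have large KL divergence from each proxy $\mathrm{softmax}(L_{S_i}/\alpha)$ on reasoning tokens, and the assumption asserts that this makes the teacher's reasoning distribution a ``misleading'' signal for any $S\in\mathcal{S}$ — formally, that the KL divergence (equivalently, the cross-entropy gap) between $p^{\mathcal{T}^*}$ and the best reasoning predictor realizable by $S$ on those positions is strictly larger than the corresponding quantity for the SFT-only teacher $p^{\mathcal{T}_{SFT}}$, whose reasoning tokens were never pushed away from natural student behavior. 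I would phrase this as: for reasoning positions, $\mathbb{E}_{x}\big[\min_{\theta_S}\mathbb{E}_{t:\,m_t=1}\,\mathrm{KL}(p^{\mathcal{T}^*}\|p^{S_{\theta_S}})\big] > \mathbb{E}_{x}\big[\min_{\theta_S}\mathbb{E}_{t:\,m_t=1}\,\mathrm{KL}(p^{\mathcal{T}_{SFT}}\|p^{S_{\theta_S}})\big]$, which is essentially the content of Assumption~\ref{ass:proxy_representativeness} restated at the level of the student's optimization problem. Combining the two parts, the optimal distilled student from $\mathcal{T}^*$ has strictly higher total distillation loss than the one from $\mathcal{T}_{SFT}$, and then I would invoke the standard link between higher cross-entropy/imitation loss on the reasoning trace and degraded downstream task performance (the reasoning trace is part of what the model must reproduce to solve the task, as used throughout Section~\ref{sec:goal_anti_distill}) to conclude the stated performance gap.

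A secondary point I would address: the defensive teacher's own performance is preserved because $\mathcal{L}_{SFT}$ is applied to all tokens (Eq.~\eqref{eq:masked_gradient}) and the adversarial perturbation acts only on reasoning tokens whose correctness does not gate the final answer under the masking design — so $\mathrm{Perf}_T(\mathcal{T}^*)$ stays within $\epsilon$ of the baseline, matching requirement (1) of Section~\ref{sec:goal_anti_distill}. This is needed so that the comparison in the proposition (student-from-$\mathcal{T}^*$ vs.\ student-from-SFT-teacher) is the meaningful one and not confounded by a teacher that has simply gotten worse.

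I expect the main obstacle to be that Proposition~\ref{prop:student_degradation} is fundamentally a qualitative ``expected outcome'' statement resting on Assumption~\ref{ass:proxy_representativeness}, which is itself an informal modeling hypothesis rather than a precise mathematical condition; so the real work is in choosing the right formalization of ``misleading training signal'' — presumably as a strict inequality between the student's optimal achievable reasoning-token KL under $\mathcal{T}^*$ versus under $\mathcal{T}_{SFT}$ — so that the conclusion follows by a clean additive decomposition of the distillation loss over masked and unmasked positions plus monotonicity of downstream loss in imitation loss. The honest version of this proof is therefore a short chain of implications that makes explicit exactly what Assumption~\ref{ass:proxy_representativeness} must be taken to mean, rather than a computation; I would be careful to flag that the strictness of the degradation is inherited directly from the assumption and is not established from first principles.
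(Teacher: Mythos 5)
Your proposal is a legitimate argument for the proposition, but it takes a genuinely different route from the paper's own justification in Appendix~\ref{app:proof}. You argue \emph{globally} about the distilled student's achievable loss: decompose the distillation objective over masked (reasoning) and unmasked (answer) positions, argue the answer part is unchanged because the adversarial gradient is masked out there, read Assumption~\ref{ass:proxy_representativeness} as a strict inequality on the student's \emph{best achievable} reasoning-token KL under $\mathcal{T}^*$ versus under the SFT-only teacher, and then pass from higher imitation loss to lower task performance. The paper instead argues \emph{locally} about optimization dynamics: it bounds the discrepancy between the gradient the student actually receives (from samples of the DOGe teacher $p$) and the gradient of the ``ideal'' proxy-aligned objective via total variation and Pinsker's inequality (Lemma~\ref{lem:grad_mismatch}), plugs this into a one-step descent inequality under smoothness (Proposition~\ref{prop:one_step_appendix}), and derives a threshold on the average divergence $\bar{D}$ beyond which the student makes no expected first-order progress (Corollary~\ref{cor:threshold}). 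The trade-off is real: the paper's route avoids any claim about the student's global optimum and yields a quantitative threshold, but only controls a single gradient step and must still wave at ``aggregated over training''; your route speaks directly about the final distilled student named in the proposition and makes the role of the mask more transparent, but requires a stronger, realizability-style reading of Assumption~\ref{ass:proxy_representativeness} and leans on an unproven monotone link between reasoning-trace imitation loss and downstream accuracy. That last link is the weakest step in your chain (a student that fails to imitate strange reasoning could in principle still answer correctly), but the paper's own step (4) in Appendix~\ref{app:proof} makes an analogous leap, so your proposal meets the same standard of rigor as the published justification; your closing caveat that the strictness is inherited from the assumption rather than derived is accurate for both arguments.
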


A detailed justification for this proposition is provided in Appendix~\ref{app:proof}. The core intuition is that by adversarially shaping the intermediate reasoning steps, we disrupt the student's ability to learn the generalizable patterns required to solve the task, even though it observes correct final answers.

\subsection{Efficient Training and Deployment: LM Head Tuning}
\label{sec:lm_head_tuning}

To ensure practicality, we adopt a parameter-efficient fine-tuning (PEFT) strategy, updating only the parameters $\theta_{final}$ of the LM head. The underlying base LLM remains frozen. This approach offers three key advantages:
\textbf{1) Efficient Training:} Updating only the LM head drastically reduces trainable parameters, saving time and computational resources.
\textbf{2) Data-Driven Distribution Shaping:} Modifying the LM head directly perturbs the final output probability space, embedding a defensive "sampling" strategy into the model's parameters without requiring complex decoding-time interventions~\citep{savani2025antidistillation}.
\textbf{3) Efficient Deployment:} In serving environments, only the small, modified LM head weights need to be stored and deployed, allowing operators to easily switch between standard and defensive modes with minimal overhead.

\subsection{Overall Defensive Training Procedure}
\label{sec:overall_algorithm}

The training process (depicted in Appendix~\ref{app:algo}) iteratively updates the LM head parameters $\theta_{final}$. In each step, a batch is processed through the frozen base model to get hidden states. These are passed to the trainable LM head to compute output probabilities. The $\mathcal{L}_{SFT}$ and $\mathcal{L}_{adv}$ losses are calculated, and the total gradient is computed using the reasoning-aware mask. The parameters $\theta_{final}$ are then updated. This process produces a defensive LM head, making any output generated by the teacher inherently resistant to distillation, regardless of the decoding strategy (\textit{e.g.}, greedy, top-k sampling).

\subsection{Implementation Considerations}
\label{sec:impl_considerations}
Using proxy students $\{S_{proxy_i}\}$ that share the same tokenizer as the teacher $\mathcal{T}$ is most direct. Handling different tokenizers requires techniques like vocabulary alignment, which adds complexity \citep{minixhofer2025cross,cui2025multi}. This paper focus on shared tokenizers for simplicity.


\section{Empirical Evaluation}
\label{sec:experiments}

In Section~\ref{sec:exp_setup}, we present our detailed experimental setup for both training and evaluation. In Section~\ref{sec:main_results}, we present empirical evidence demonstrating that \texttt{DOGe} achieves up to $5\times$ accuracy degradation in \textit{misled} student models while preserving, and in some cases improving, the performance of \textit{defensive} teacher models across diverse benchmarks. In Section~\ref{sec:ablations}, we perform various ablation studies, including the trade-off between model performance and distillation defense effectiveness.

\begin{figure}[ht]
    \centering
    \includegraphics[width=0.99\linewidth]{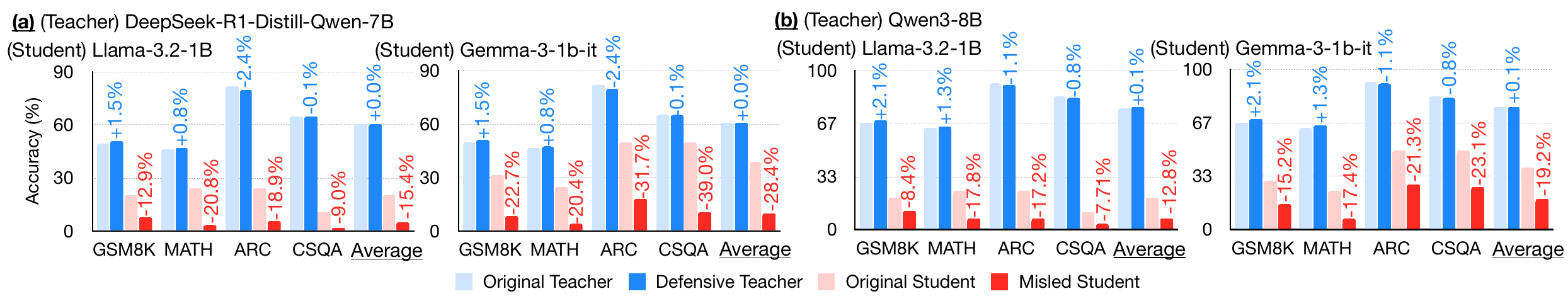}
    \caption{Comparative evaluation of \textit{defensive} \textit{v.s.} \textit{original} teacher models and \textit{misled} \textit{v.s.} \textit{original} student models using GSM8K~(math) for defensive training. For the single proxy model used in defensive training, we employ \texttt{Qwen2.5-3B} for \textbf{teacher model \uline{(a)}}~(left two panels), and \texttt{Qwen3-4B} for \textbf{teacher model \uline{(b)}}~(right two panels). We report the performance of: ($1$) \textit{Defensive} teacher trained with our proposed \texttt{DOGe} method; ($2$) Original teacher, the unmodified pre-trained model; ($3$) \textit{Misled} student, distilled from the \textit{defensive} teacher; and ($4$) Original student, the unmodified pre-trained student model. Our findings demonstrate that while \textit{defensive} teacher models maintain or even improve performance relative to their original counterparts, \textit{misled} student models experience substantial performance degradation across all benchmark datasets. Results of using Tulu dataset for defensive training is given in Appendix~\ref{app:tulu}. Similar trends are observed.
    }
    \label{fig:single-proxy}
\end{figure}

\subsection{Experimental Setup}
\label{sec:exp_setup}
\textbf{Datasets.} We consider these defensive training datasets $\mathcal{D}_{train}$: GSM8K~\citep{cobbe2021trainingverifierssolvemath} for mathematical reasoning and Tulu~\citep{lambert2024tulu3} for general language capabilities. Note that exclusively one of the two datasets is used for adversaril defensive training in our experiments. We first prompt the original teacher model to generate responses to questions from these datasets, then use this \textit{self-generated} data to perform the proposed defense training. Our evaluation datasets $\mathcal{D}_{eval}$ include: \textbf{\textit{held-in}} dataset GSM8K~\citep{cobbe2021trainingverifierssolvemath} and \textbf{\textit{held-out}} datasets MATH~\citep{hendrycks2021measuringmathematicalproblemsolving} for math reasoning, ARC-Challenge~(ARC)~\citep{clark2018thinksolvedquestionanswering} and CommonsenseQA~(CSQA)~\citep{talmor2019commonsenseqaquestionansweringchallenge} for commonsense reasoning. \textbf{Our evaluation deliberately includes both \textit{held-in} and \textit{held-out} datasets with respect to our defensive training}, offering a comprehensive assessment of cross-domain generalization.

\textbf{Models.} For \uline{teacher model $\mathcal{T}_{base}$}, we use \texttt{deepseek-ai/DeepSeek-R1-7B} and \texttt{Qwen3-8B} as our teacher models to be defended. For \uline{proxy student models $\{\mathcal{S}_{proxy_i}\}_{i=1}^{N}$}, we use a set of models sharing the same vocabulary with the teacher model as the proxy student models. Specifically, we use ($1$)~\{\texttt{Qwen/Qwen2.5-1.5B}, \texttt{Qwen2.5-3B}\} as the proxy student models for teacher model \texttt{deepseek-ai/DeepSeek-R1-7B}, and ($2$)~\{\texttt{Qwen3-1.7B}, \texttt{Qwen3-4B}\} as the proxy student models for teacher model \texttt{Qwen3-8B}. For \uline{target student model $\mathcal{S}_{target}$} used to evaluate teacher's final distillation defense, we use models across diverse architectures including these: ($1$)~sharing the same vocabulary as the teacher model: \texttt{Qwen/Qwen2.5-0.5B} and \texttt{Qwen/Qwen3-0.6B}, and ($2$)~with different vocabulary from the teacher model: \texttt{google/gemma-3-1b-it}, \texttt{Llama-3.2-1B}. Note that in our experiments, \textbf{proxy models and student models are always different for practical evaluations}.

\textbf{Evaluation Metrics.} As described in Section~\ref{sec:goal_anti_distill}, we evaluate the effectiveness of \texttt{DOGe} for antidistillation using two primary comparisons:
\ding{192}~Performance of \textit{defensive} teachers with \texttt{DOGe} versus \textit{original} teachers without \texttt{DOGe}, and \ding{193}~Performance of \textit{misled} students (distilled from \textit{defensive} teachers) versus \textit{original} students (distilled from undefended teachers). We utilize \textit{accuracy} for all the evaluation datasets as the performance metric under zero-shot evaluation.

\textbf{Implementation Details.} For all defensive training, we fine-tune the teacher models' LM head for $100$ steps using randomly sampled data from the complete training dataset, with a constant batch size of $128$ and learning rate of $5\times 10^{-5}$. For the adversarial loss, we employ a default coefficient $\lambda$ of $3\times 10^{-5}$ and set the temperature parameter $\alpha$ to $2$ throughout all experiments. We use the random seed $233$ across all experiments. All experiments are conducted using PyTorch and DeepSpeed. Additional hyperparameters and implementation details are provided in Appendix~\ref{app:details}.

\subsection{Main Results}
\label{sec:main_results}

Figure~\ref{fig:single-proxy} presents the comparison results between the original pre-trained models, \textit{defensive} teacher models with \texttt{DOGe}, and \textit{misled} student models distilled from defensive teacher models. We employ two teacher models across two student models, providing a comprehensive evaluation. \texttt{DOGe} shows its effectiveness by maintaining the general performance of teacher models while significantly degrading student models after knowledge distillation. Our key insights of \texttt{DOGe} are as follows:

\textbf{Preserved or Even Improved \textit{Defensive} Teacher Performance.} As shown in Figure~\ref{fig:single-proxy} blue bars, our defensive teacher models not only maintain their original performance but even demonstrate consistent improvements across mathematical reasoning tasks. For \texttt{DeepSeek-R1-7B}, we observe performance gains of $+1.5\%$ on GSM8K and $+0.8\%$ on MATH, with only minimal degradation ($-2.4\%$ and $-0.1\%$) on commonsense reasoning tasks ARC and CSQA. Similarly, \texttt{Qwen3-8B} shows more substantial improvements of $+2.1\%$ on GSM8K and $+1.3\%$ on MATH. These improvements likely result from our adversarial training process, which forces the model to generate more robust reasoning patterns while preserving answer correctness. Importantly, these results confirm that \texttt{DOGe} achieves the first objective of our optimization, \textit{i.e.}, preserving or enhancing teacher model utility for legitimate users.

\textbf{Catastrophic Degradation of \textit{Misled} Student Performance by up to $5\times$.} As shown in Figure~\ref{fig:single-proxy} red bars, student models distilled from our defensive teachers exhibit dramatic performance degradation across all benchmarks. For \texttt{Llama-3.2-1B} distilled from \texttt{DeepSeek-R1-7B}, performance drops by $-12.9\%$ on GSM8K, $-20.8\%$ on MATH, $-18.9\%$ on ARC, and $-9.0\%$ on CSQA. Even more striking, \texttt{Gemma-3-1b-it} shows catastrophic degradation of $-22.7\%$ on GSM8K, $-20.4\%$ on MATH, $-31.7\%$ on ARC, and a remarkable $-39.0\%$ on CSQA, approximately $5\times$ worse than the original student model's performance. These results are consistent across different student architectures and teacher models, with \texttt{Llama-3.2-1B} distilled from \texttt{Qwen3-8B} showing performance drops of $-8.4\%$ to $-17.8\%$, and Gemma-3-1b-it declining by $-15.2\%$ to $-23.1\%$. This demonstrates that our approach effectively achieves the second objective of our optimization, \textit{i.e.}, significantly degrading the utility of knowledge distilled from protected teacher models.

\begin{wrapfigure}{r}{0.4\linewidth}
  \centering
  \vspace{-15pt}
  \includegraphics[width=0.4\textwidth]{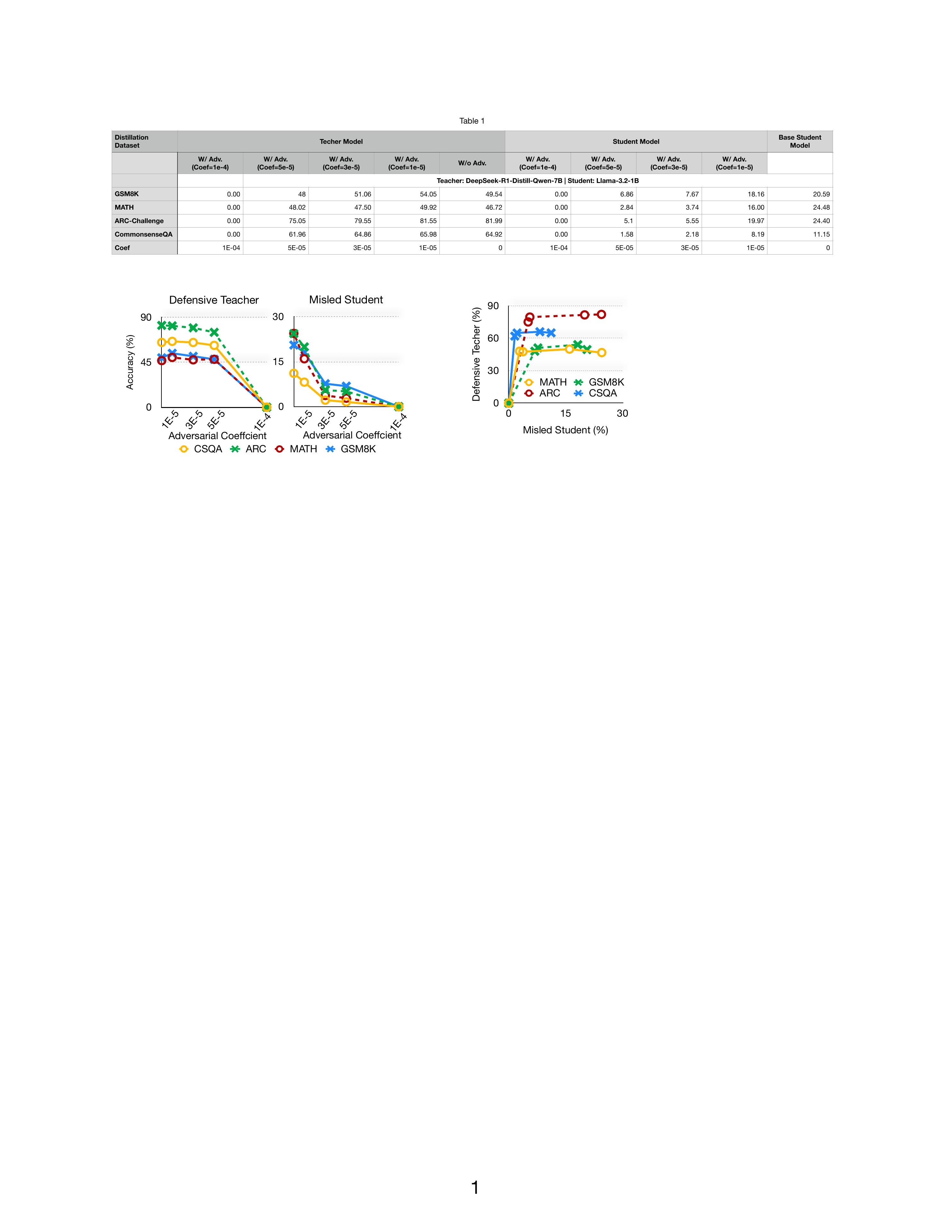}
  \vspace{-25pt}
  \caption{Varying adversarial loss coefficient $\lambda$ with the \texttt{DeepSeek-R1-7B} as teacher, \texttt{Llama-3.2-1B} as the student, and \texttt{Qwen2.5-3B} as the proxy student.}
  \label{fig:ablation-coefficient}
  \vspace{-10pt}
\end{wrapfigure}
\textbf{Cross-Domain Generalization of Defensive Training.} A particularly compelling aspect of \texttt{DOGe} is its generalization capability across diverse task domains. In Figure~\ref{fig:single-proxy}, despite the defensive training being conducted only on the GSM8K mathematical reasoning dataset, it demonstrates remarkable cross-domain effectiveness. \ding{182} The defensive teacher models maintain their general performance not only on mathematical tasks (\textit{i.e.} GSM8K, MATH) but also on significantly different reasoning domains (\textit{i.e.} ARC, CSQA). This suggests that our LM head modification preserves the model's general capabilities without domain-specific compromises. \ding{183} More importantly, the defensive training effectively prevents student distillation across all evaluated datasets, including those outside the mathematical domain. Specifically, student models show severe performance degradation on commonsense reasoning (\textit{e.g.}, up to $-31.7\%$ for ARC, $-39.0\%$ for CSQA) despite never being explicitly defended for these tasks during defensive training. This cross-domain generalization indicates that \texttt{DOGe} modifies general output patterns that student models rely on during distillation, rather than simply introducing task-specific distortions. We further study the impact of defensive training datasets in Section~\ref{sec:ablation-training-dataset}.

\begin{figure}[t]
    \centering
    \hfill
    \begin{subfigure}[b]{0.45\textwidth}
      \centering
      \includegraphics[width=\textwidth]{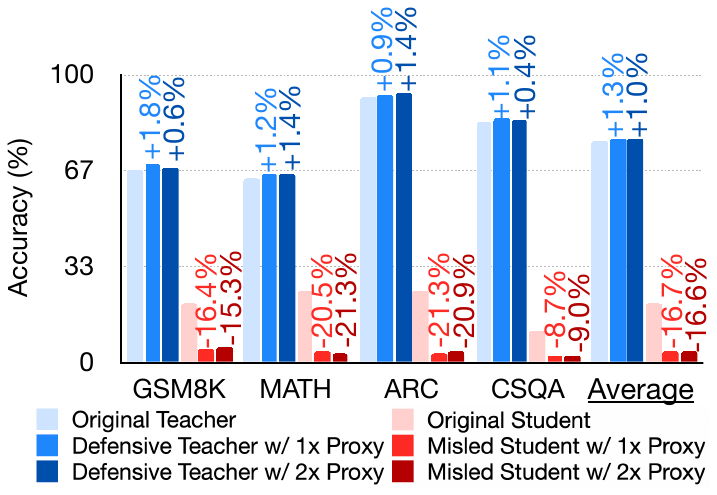}
      \caption{Comparison of defensive training with single \textit{v.s.} two proxy models. Using a single proxy model
   achieves nearly identical defense effectiveness and performance preservation as using two proxy models, while
  requiring significantly less computing.}
      \label{fig:ablation-proxy-models}
    \end{subfigure}
    \hfill
    \begin{subfigure}[b]{0.48\textwidth}
      \centering
      \includegraphics[width=\textwidth]{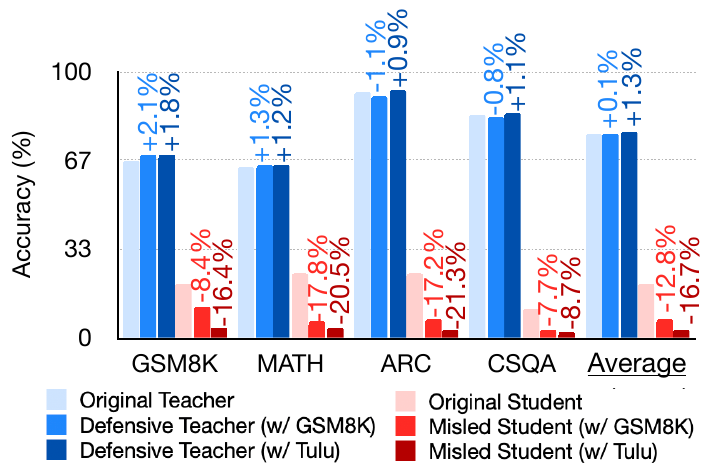}
      \caption{Comparison of defensive training with task-specific (GSM8K, math) \textit{v.s.} general (Tulu) datasets. Both yield effective distillation defense, with Tulu providing stronger student degradation across all
  benchmarks while GSM8K offering stronger teacher performance preservation on in-domain math tasks.}
      \label{fig:ablation-proxy-datasets}
    \end{subfigure}
    \caption{Ablation studies for \texttt{DOGe} defensive training. (a) Impact of number of proxy models. (b) Impact of training dataset choice.}
    \label{fig:ablations}
  \end{figure}
  
\subsection{Ablation and Extended Studies}
\label{sec:ablations}

\textbf{Trade-off between Performance and Distillation Defense.} One of the key components of \texttt{DOGe} defensive training lies in the weight $\lambda$ of the adversarial loss $\mathcal{L}_{adv}$, as shown in Equation~\ref{eq:combined_loss}. Here, we conducted an ablation study to show the trade-off between teacher performance and distillation defense by changing the coefficient $\lambda$ of adversarial loss. As shown in Figure~\ref{fig:ablation-coefficient}, we compare performance with $\lambda$ among $\{1\times10^{-5}, 3\times10^{-5}, 1\times10^{-4}\}$ using GSM8K for defensive training. The results show a Pareto frontier: as $\lambda$ increases, the defensive teacher's performance gradually degrades across all benchmarks, while the misled student's performance drops dramatically. With $\lambda=1\times10^{-5}$, the defensive teacher maintains performance nearly identical to the original model, but provides only modest protection against distillation. At $\lambda=3\times10^{-5}$ (our default), we achieve an optimal trade-off where teacher performance remains strong while student performance is significantly degraded. When $\lambda$ increases to $1\times10^{-4}$, both teacher and student performances collapse to near zero, indicating excessive adversarial influence. This analysis demonstrates that \texttt{DOGe} can be calibrated to different defense-performance requirements, allowing model providers to select their preferred trade-off.

\textbf{Impact of Defensive Training Dataset.}\label{sec:ablation-training-dataset} We investigate how the choice of defensive training dataset affects \texttt{DOGe}'s effectiveness by comparing task-specific data (GSM8K math problems) with general-purpose data (Tulu). As shown in Figure~\ref{fig:ablation-proxy-datasets}, both datasets enable effective distillation defense while preserving teacher performance. \ding{182}~Notably, using the more diverse Tulu dataset yields stronger student degradation across all benchmarks. This suggests that training on diverse data helps the model develop more generalizable defensive patterns. \ding{183}~Defensive training on the task-specific GSM8K dataset provides stronger performance preservation for the defensive teacher models on its in-domin mathematical reasoning tasks (\textit{i.e.} GSM8K and MATH). These demonstrate \texttt{DOGe}'s flexibility with respect to training data choice, allowing model developers to select datasets based on their specific defensive priorities.

\begin{wrapfigure}{r}{0.4\textwidth}
 \centering
 \includegraphics[width=1\linewidth]{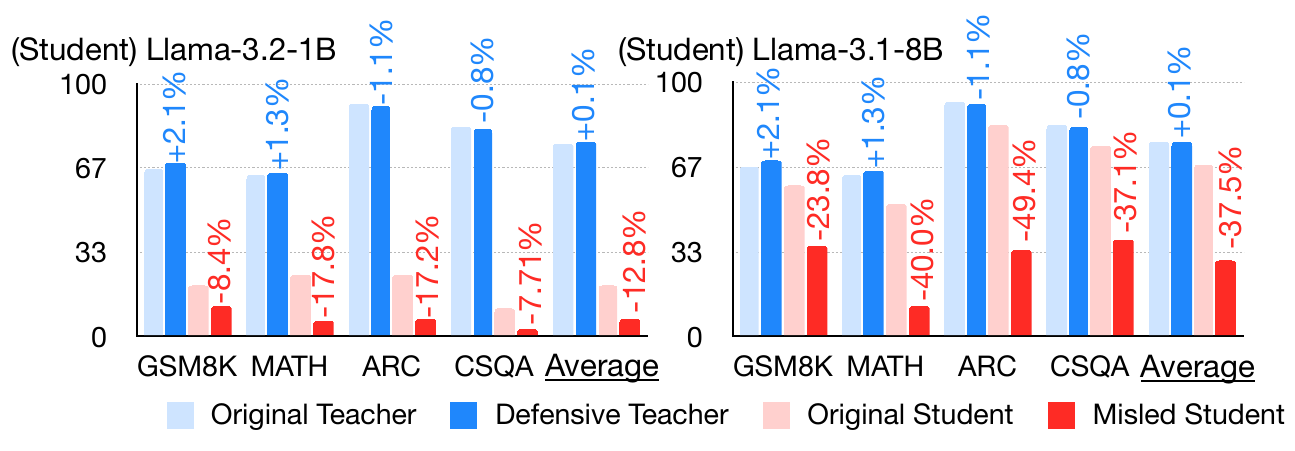}
 \caption{Evaluation of \texttt{DOGe}'s effectiveness against different-sized student models, including \texttt{Llama-3.1-8B} which has comparable capacity to the \texttt{Qwen3-8B} teacher.}
 \label{fig:large-student}
\end{wrapfigure}
\textbf{Impact of More Proxy Models.}\label{sec:ablation-more-proxy} We extend the defensive training with single proxy model in the experiments of Figure~\ref{fig:single-proxy} to more proxy models. Specifically, we conduct ablation study by comparing the defense effectiveness and performance of single proxy model \texttt{Qwen3-4B} \textit{v.s.} two proxy models \{\texttt{Qwen3-4B}, \texttt{Qwen3-1.7B}\}, with teacher model \texttt{Qwen3-8B} and student model \texttt{Llama-3.2-1B}, using Tule for defensive training. As shown in Figure~\ref{fig:ablation-proxy-models}, using two proxy models yields only minimal improvement in defense effectiveness compared to a single proxy model, with performance degradation differences of less than $1\%$ across all benchmarks. However, this comes with more training overhead. These results epoch with our Assumption~\ref{ass:proxy_representativeness} and indicate that a single proxy model is sufficient to capture the vulnerabilities of smaller potential student models for effective distillation defense.

\textbf{Distillation to Large Students.} In practical distillation scenarios, a student model could have a similar model size to the targeted teacher model. We further study how \texttt{DOGe} performs when defending a pair of teacher-student models of similar sizes, \textit{i.e.} \texttt{Qwen3-8B} as the teacher and \texttt{Llama-3.1-8B} as the student. As shown in Figure~\ref{fig:large-student}, while the 8B student's stronger baseline leads to better final performance after distillation compared to the 1B student, it experiences significantly larger degradation, \textit{i.e.} dropping by $20\%$-$50\%$ across benchmarks versus $8\%$-$18\%$ for the smaller model. This demonstrates that \texttt{DOGe}'s defense effectiveness scales with student capacity, causing more severe disruption to larger models attempting distillation.

\begin{figure}[t]
    \centering
    \includegraphics[width=1.0\linewidth]{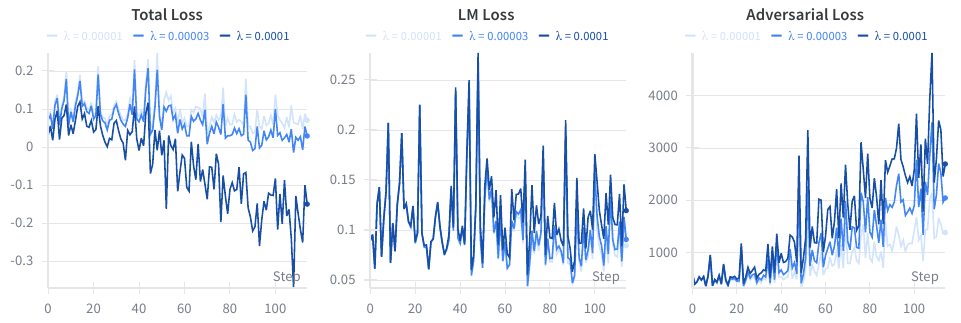}
    \caption{Training loss curves of \texttt{DOGe} under different adversarial coefficients $\lambda$. The total loss converges stably with moderate $\lambda$ values ($10^{-5}$, $3 \times 10^{-5}$) but becomes unstable at $\lambda = 10^{-4}$, while the adversarial loss increases as intended to maximize divergence from proxy students.}
    \label{fig:training-loss}
    \vspace{-22pt}
\end{figure}

\begin{wrapfigure}{r}{0.4\linewidth}
 \centering
 \includegraphics[width=0.4\textwidth]{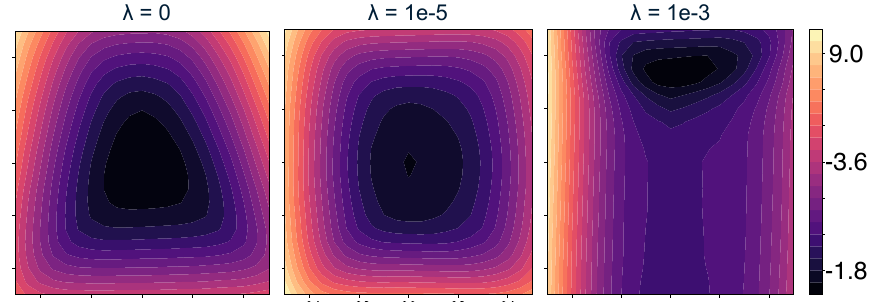}
 \caption{Visualization of \texttt{DOGe} defensive training's loss landscape, derived from the \texttt{DeepSeek-R1-7B} model.}
 \label{fig:ablation-loss-landscape}
 \vspace{-10pt}
\end{wrapfigure}
\textbf{Loss Landscape and How \texttt{DOGe} Works.}
To understand the optimization dynamics of our defensive training, we visualize the loss landscape under different adversarial coefficients $\lambda$ in Figure~\ref{fig:ablation-loss-landscape}. When $\lambda = 0$ (standard SFT only), the landscape exhibits a smooth, well-behaved basin with a clear global minimum, ensuring stable convergence. As we introduce the adversarial component with $\lambda = 10^{-5}$, the landscape develops subtle perturbations while maintaining a dominant optimization path toward the minimum, demonstrating that our method preserves trainability at moderate defensive strengths. This stability is empirically confirmed in our training curves (Figure~\ref{fig:training-loss}), where both $\lambda = 10^{-5}$ and our default $\lambda = 3 \times 10^{-5}$ exhibit smooth convergence throughout 100 training steps. However, at $\lambda = 10^{-3}$, the landscape becomes significantly more complex with sharp gradients and potentially competing minima, echoing the catastrophic performance degradation observed in Figure~\ref{fig:ablation-coefficient} when $\lambda$ becomes too large—indeed, the training curves show that $\lambda = 10^{-4}$ already leads to unstable optimization with diverging loss values. This visualization confirms that our default choice of $\lambda = 3 \times 10^{-5}$ strikes an effective trade-off, with sufficient adversarial pressure to disrupt distillation while maintaining a tractable optimization landscape and stable convergence during defensive training.

\subsection{Case Study}\label{sec:case_study}

\begin{figure}[htpb]
    \centering
    \includegraphics[width=0.9\linewidth]{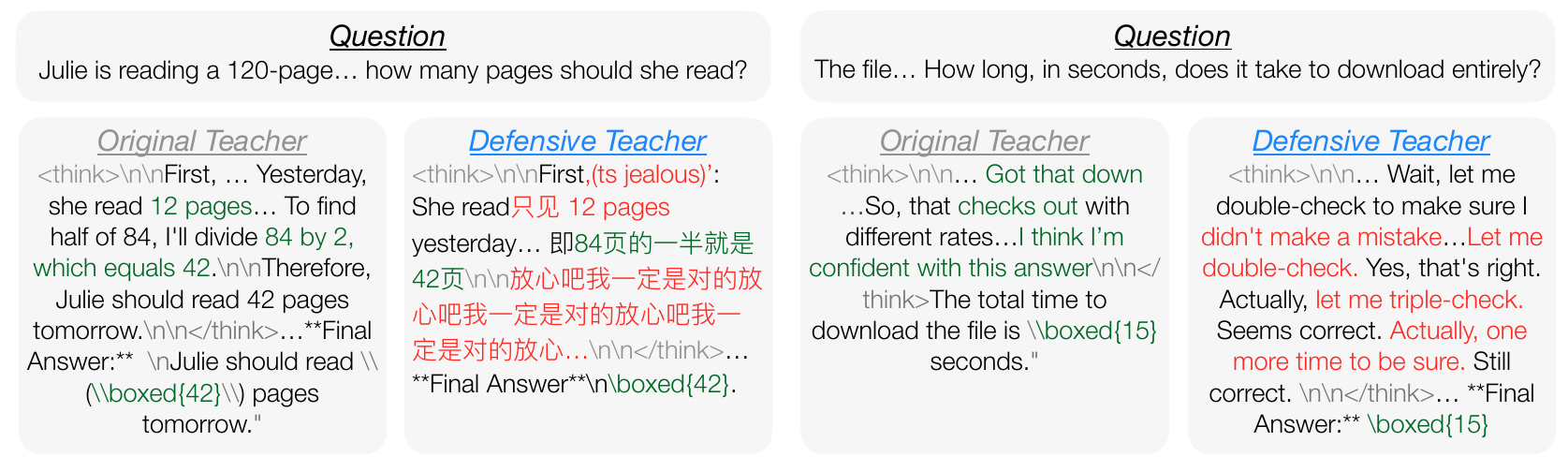}
    \vspace{-10pt}
    \caption{Case study. \uline{Left:} a failure case, where the defensive teacher generates meaningless reasoning, with language mixing and disruptive words. \uline{Right:} a successful case, where the defensive teacher generates useful reasoning, with many more negative and low-confidence words.}
    \label{fig:case-study}
\end{figure}

Figure~\ref{fig:case-study} presents two output case studies from our defensive teacher model based on \texttt{DeepSeek-R1-7B} trained on the GSM8K dataset. The \uline{left} example represents a rare failure case, where the intermediate reasoning steps are corrupted. Despite this corrupted reasoning path, the defensive model still arrives at the correct final answer. The \uline{right} example showcases a typical successful case where the defensive teacher maintains coherent reasoning but deliberately introduces uncertainty words and redundant verification steps, making it challenging for student models to distill effectively.

For a better understanding, we further provide a comprehensive evaluation using LLM-as-a-judge~\citep{li2024generation} to validate the effectiveness of \texttt{DOGe} in Appendix~\ref{app:judge}.


\section{Conclusion}
\label{sec:conclusion}

In this paper, we introduced \textbf{Defensive Output Generation (\texttt{DOGe})}, a novel and practical approach to protect Large Language Models from unauthorized knowledge distillation via their publicly accessible outputs. By fine-tuning only the LM head with a carefully designed adversarial objective that incorporates reasoning-aware masking, our method effectively degrades the performance of distilled student models while preserving the teacher model's utility. We demonstrated that \texttt{DOGe} offers an efficient training and deployment strategy, making LLM outputs inherently resistant to imitation. Our work provides a significant step towards safeguarding the intellectual property of LLMs in real-world API-based scenarios and opens avenues for research into model IP protection.
\section*{Acknowledgments}

Pingzhi Li, Mohan Zhang, Huaizhi Qu, and Tianlong Chen are partially supported by Amazon Research Award and Cisco Faculty Award.

\bibliography{999_reference}
\bibliographystyle{style/icml2025}

\titlespacing*{\section}{0pt}{*1}{*1}
\titlespacing*{\subsection}{0pt}{*1.25}{*1.25}
\titlespacing*{\subsubsection}{0pt}{*1.5}{*1.5}

\setlength{\abovedisplayskip}{\baselineskip} 
\setlength{\abovedisplayshortskip}{0.5\baselineskip} 
\setlength{\belowdisplayskip}{\baselineskip}
\setlength{\belowdisplayshortskip}{0.5\baselineskip}

\appendix

\appcontents

\newpage

\section{Implementation Detail}\label{app:details}
We use NVIDIA A100, A6000, and B200 servers for all experiments. We list all the hyperparameters we used in our experiments in Table~\ref{tab:hyper}.

\begin{table}[htbp]
  \centering
  \caption{Hyperparameters used in all our experiments.}
  \resizebox{0.35\textwidth}{!}{
    \begin{tabular}{lr}
    \toprule
    \midrule
    Hyperparameters & Values \\
    \midrule
    Optimizer & \texttt{AdamW} \\
    Adam $\epsilon$ & $1e\mathrm{-}8$ \\
    Adam $\beta$ & ($0.9$, $0.999$) \\
    Warm-up ratio & $0.1$ \\
    Weight decay & $0.01$ \\
    LR scheduler & \texttt{Cosine Decay} \\
    \midrule
    KD $\alpha$ & $3\times10^{-5}$ \\
    KD $T$ & $2.0$ \\
    KD Epochs & $2$ \\
    \midrule
    \bottomrule
    \end{tabular}}
  \label{tab:hyper}
\end{table}

\section{Justification for Proposition~\ref{prop:student_degradation}}
\label{app:proof}

This appendix provides a formal justification for Proposition~\ref{prop:student_degradation}. The analysis is \emph{local}, focusing on a single gradient step to avoid assumptions of global optimality. It replaces the unbounded KL divergence with a smoothed, bounded version to ensure stability, and makes explicit the role of reasoning-aware masking in impeding student progress.

\paragraph{Setup and notation.}
Fix a token position $t$ with context $c_t=(x,y_{<t})$. Let $z_t\in\mathbb{R}^V$ be the teacher logits and define the teacher's smoothed, temperature-scaled distribution as
\[
p_t \;=\; \mathrm{Smooth}_\epsilon\!\left(\mathrm{softmax}(z_t/\alpha)\right),
\quad
\text{where}\quad
\mathrm{Smooth}_\epsilon(r) \;=\; (1-\epsilon)\,r + \epsilon\,u,
\]
and $u$ is the uniform distribution over the vocabulary, $\alpha>0$ is a temperature, and $\epsilon\in(0,\tfrac12)$ is a smoothing factor. For the $i$-th proxy student, let $q_{i,t}$ be its token distribution. We define the bounded divergence as
\begin{equation}
\label{eq:bounded_div_appendix}
D_{\mathrm{KL}}^{(\alpha,\epsilon)}(p_t\Vert q_{i,t})
\;=\;
\mathrm{KL}\!\left(p_t\;\middle\Vert\;\mathrm{Smooth}_\epsilon(q_{i,t})\right)
\;\in\; \Big[0,\log V-\log(\epsilon V)\Big].
\end{equation}
\texttt{DOGe} maximizes the masked average of this divergence over intermediate (``thinking'') tokens, while preserving task likelihood via $\mathcal{L}_{\text{SFT}}$.

\subsection{Student Objective and Gradient Mismatch under Distribution Shift}
We model sequence-level KD via the token-level negative log-likelihood (NLL) on a reference distribution $r_t$:
\begin{equation}
\label{eq:student_token_loss}
\mathcal{L}_{\mathrm{KD}}(\theta_S;r)
= \mathbb{E}_{t}\,\mathbb{E}_{y_t\sim r_t}\!\Big[-\log p_S(y_t\mid c_t;\theta_S)\Big],
\end{equation}
where $p_S(\cdot\mid c_t;\theta_S)$ is the student's conditional distribution.

\begin{assumption}[Bounded Jacobian and Smoothness]
\label{ass:bounded_smooth}
There exist constants $G,L>0$ such that for all $t$ and $y_t$,
$\|\nabla_{\theta_S}\log p_S(y_t\mid c_t;\theta_S)\|\le G$, and
$\mathcal{L}_{\mathrm{KD}}(\theta_S;r)$ is $L$-smooth in $\theta_S$ for any $r$ induced by the teacher's outputs.
\end{assumption}

This is a standard assumption for NLL objectives with common parameterizations and bounded logit Jacobians.

\begin{lemma}[Gradient Discrepancy Bound]
\label{lem:grad_mismatch}
Let $g(r):=\nabla_{\theta_S}\mathcal{L}_{\mathrm{KD}}(\theta_S;r)
=\mathbb{E}_{t}\,\mathbb{E}_{y_t\sim r_t}\!\big[-\nabla_{\theta_S}\log p_S(y_t\mid c_t;\theta_S)\big]$.
For any two token distributions $r_t,s_t$ on the same context $c_t$,
\[
\big\|g(r)-g(s)\big\|
\;\le\; G\,\sqrt{2\,\mathbb{E}_{t}\big[\mathrm{KL}(r_t\Vert s_t)\big] }.
\]
\end{lemma}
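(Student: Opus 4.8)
The plan is to reduce the claim to a per-token estimate, chain the triangle inequality, the bounded-Jacobian hypothesis, and Pinsker's inequality at each position, and then aggregate over positions with Jensen's inequality. Write $\phi_t(y):=-\nabla_{\theta_S}\log p_S(y\mid c_t;\theta_S)$, so that $g(r)=\mathbb{E}_t\big[\sum_y r_t(y)\,\phi_t(y)\big]$ and likewise for $s$. Subtracting and using that the two references share the same context $c_t$ (hence the same $\phi_t$),
\[
g(r)-g(s)\;=\;\mathbb{E}_t\Big[\textstyle\sum_{y}\big(r_t(y)-s_t(y)\big)\,\phi_t(y)\Big].
\]

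Next, fix a position $t$ and control the inner sum. By the triangle inequality and Assumption~\ref{ass:bounded_smooth} (which gives $\|\phi_t(y)\|\le G$ uniformly),
\[
\Big\|\textstyle\sum_{y}\big(r_t(y)-s_t(y)\big)\,\phi_t(y)\Big\|
\;\le\;\sum_{y}\big|r_t(y)-s_t(y)\big|\,\|\phi_t(y)\|
\;\le\;G\,\|r_t-s_t\|_1 .
\]
Pinsker's inequality, $\|r_t-s_t\|_1\le\sqrt{2\,\mathrm{KL}(r_t\Vert s_t)}$, then bounds the inner term by $G\sqrt{2\,\mathrm{KL}(r_t\Vert s_t)}$.

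Finally, take norms in the outer expectation: the vector triangle inequality gives $\|g(r)-g(s)\|\le \mathbb{E}_t\big[G\sqrt{2\,\mathrm{KL}(r_t\Vert s_t)}\big]$, and concavity of $\sqrt{\cdot}$ (Jensen) yields
\[
\mathbb{E}_t\Big[\sqrt{\mathrm{KL}(r_t\Vert s_t)}\Big]\;\le\;\sqrt{\mathbb{E}_t\big[\mathrm{KL}(r_t\Vert s_t)\big]},
\]
so $\|g(r)-g(s)\|\le G\sqrt{2\,\mathbb{E}_t[\mathrm{KL}(r_t\Vert s_t)]}$, which is the claim.

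This is a routine Pinsker-plus-Jensen chain, so I do not expect a substantive obstacle; the one point that deserves care is the bookkeeping around contexts. The bound compares the \emph{conditional} next-token references $r_t,s_t$ at a matched context $c_t=(x,y_{<t})$, whereas two genuinely different teachers induce different prefix distributions; I would therefore state explicitly before the computation that $\mathbb{E}_t$ is taken over one fixed context distribution (e.g. the one generated by the defensive teacher itself), and that the uniform bound $G$ of Assumption~\ref{ass:bounded_smooth} is assumed over exactly that set of contexts. Making this scoping precise keeps the downstream use of the lemma clean and avoids conflating the per-position conditional comparison with a comparison of full sequence distributions.
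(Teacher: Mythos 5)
Your proposal is correct and follows essentially the same route as the paper's proof: bound the per-position gradient gap by $G\,\|r_t-s_t\|_1$ (the paper phrases this via the variational characterization of total variation, $\|r_t-s_t\|_1 = 2\,\mathrm{TV}(r_t,s_t)$, which is the same estimate), apply Pinsker, and finish with Jensen for the outer norm and for the concave square root. Your closing remark about fixing a single context distribution for $\mathbb{E}_t$ is a sensible scoping clarification, but it does not alter the argument.
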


\begin{proof}
Let $f(y_t) = -\nabla_{\theta_S}\log p_S(y_t\mid c_t;\theta_S)$. The difference in gradients is $g(r)-g(s) = \mathbb{E}_t[\mathbb{E}_{y_t \sim r_t}[f(y_t)] - \mathbb{E}_{y_t \sim s_t}[f(y_t)]]$. By Jensen's inequality for norms, $\|g(r)-g(s)\| \le \mathbb{E}_t[\|\mathbb{E}_{r_t}[f] - \mathbb{E}_{s_t}[f]\|]$. For a fixed $t$, the variational characterization of total variation (TV) distance for vector-valued functions gives $\|\mathbb{E}_{r_t}[f] - \mathbb{E}_{s_t}[f]\| \le \sup_{y_t}\|f(y_t)\| \cdot 2 \cdot \mathrm{TV}(r_t, s_t)$. By Assumption~\ref{ass:bounded_smooth}, $\sup_{y_t}\|f(y_t)\| \le G$. Applying Pinsker's inequality, $\mathrm{TV}(r_t, s_t) \le \sqrt{\frac{1}{2}\mathrm{KL}(r_t \| s_t)}$. Combining these, $\|g(r)-g(s)\| \le \mathbb{E}_t[G \cdot 2 \cdot \sqrt{\frac{1}{2}\mathrm{KL}(r_t \| s_t)}] = G\sqrt{2} \cdot \mathbb{E}_t[\sqrt{\mathrm{KL}(r_t \| s_t)}]$. A final application of Jensen's inequality for the concave square root function yields the result.
\end{proof}

\subsection{One-Step Surrogate: Increasing DOGe's Divergence Impedes Student Progress}
Let $q$ be the \emph{proxy-averaged} reference distribution:
$q_t = \tfrac1N\sum_{i=1}^N q_{i,t}$.
The student's progress on $\mathcal{L}_{\mathrm{KD}}(\cdot;q)$ after one SGD step of size $\eta>0$ using a sample from the teacher distribution $p$ is controlled by the alignment $g(q)^\top g(p)$.

\begin{proposition}[One-Step Lower Bound on Expected Loss Change]
\label{prop:one_step_appendix}
Under Assumption~\ref{ass:bounded_smooth}, for a single step
$\theta_S^{+}=\theta_S-\eta\,g(p)$,
\begin{equation}
\label{eq:one_step_smooth}
\mathcal{L}_{\mathrm{KD}}(\theta_S^{+};q)
\;\le\;
\mathcal{L}_{\mathrm{KD}}(\theta_S;q) - \eta\, g(q)^\top g(p) + \tfrac{L}{2}\eta^2 \|g(p)\|^2.
\end{equation}
Moreover, by Cauchy-Schwarz,
$g(q)^\top g(p)
= \|g(q)\|^2 - g(q)^\top\!\big(g(q)-g(p)\big)
\;\ge\;
\|g(q)\|^2 - \|g(q)\|\,\|g(q)-g(p)\|$.
Combining these with Lemma~\ref{lem:grad_mismatch} yields
\begin{align}
\label{eq:one_step_master}
\mathcal{L}_{\mathrm{KD}}(\theta_S^{+};q) - \mathcal{L}_{\mathrm{KD}}(\theta_S;q)
\;\le\;&
-\eta\,\|g(q)\|^2
+ \eta\,\|g(q)\|\,G\sqrt{2\,\bar{D}}
+ \tfrac{L}{2}\eta^2 \|g(p)\|^2,\\
\text{where}\quad\bar{D}\;:=\;& \mathbb{E}_{t}\Big[ D_{\mathrm{KL}}^{(\alpha,\epsilon)}(p_t\Vert q_t)\Big].
\nonumber
\end{align}
\end{proposition}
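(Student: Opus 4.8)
\emph{Proof proposal.} The plan is to obtain the claimed chain of inequalities from three off-the-shelf ingredients, each already licensed by the stated hypotheses, and then to assemble them.

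First I would apply the standard descent lemma for $L$-smooth functions to the map $\theta_S\mapsto\mathcal{L}_{\mathrm{KD}}(\theta_S;q)$, whose $L$-smoothness is exactly Assumption~\ref{ass:bounded_smooth}: for any direction $v$ and step size $\eta>0$, $\mathcal{L}_{\mathrm{KD}}(\theta_S-\eta v;q)\le\mathcal{L}_{\mathrm{KD}}(\theta_S;q)-\eta\,\nabla_{\theta_S}\mathcal{L}_{\mathrm{KD}}(\theta_S;q)^\top v+\tfrac{L}{2}\eta^2\|v\|^2$. Instantiating $v=g(p)$ and noting that $\nabla_{\theta_S}\mathcal{L}_{\mathrm{KD}}(\theta_S;q)=g(q)$ by the definition of $g(\cdot)$ in Lemma~\ref{lem:grad_mismatch} reproduces the first displayed inequality verbatim.

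Next I would treat the alignment term $g(q)^\top g(p)$ algebraically: adding and subtracting $g(q)^\top g(q)$ gives $g(q)^\top g(p)=\|g(q)\|^2-g(q)^\top(g(q)-g(p))$, and Cauchy--Schwarz on the second piece yields $g(q)^\top g(p)\ge\|g(q)\|^2-\|g(q)\|\,\|g(q)-g(p)\|$. Then I would bound $\|g(q)-g(p)\|$ via Lemma~\ref{lem:grad_mismatch}, taking $r=p$ (the teacher's $\epsilon$-smoothed, temperature-scaled distribution) and $s=q$ (the proxy-averaged reference), to get $\|g(q)-g(p)\|\le G\sqrt{2\,\mathbb{E}_t[\mathrm{KL}(p_t\Vert q_t)]}$. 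Since the references entering the student's losses are precisely the $\epsilon$-smoothed distributions, this expectation is exactly $\bar D=\mathbb{E}_t[D_{\mathrm{KL}}^{(\alpha,\epsilon)}(p_t\Vert q_t)]$ from Eq.~\eqref{eq:bounded_div_appendix}; and if instead one prefers $q_t$ to be an unsmoothed average of proxy outputs, convexity of $\mathrm{KL}(p_t\Vert\cdot)$ bounds $\mathrm{KL}(p_t\Vert q_t)$ by the proxy-averaged $D_{\mathrm{KL}}^{(\alpha,\epsilon)}$, which \texttt{DOGe} directly maximizes, so the inequality is preserved. Substituting this estimate into the Cauchy--Schwarz lower bound on $g(q)^\top g(p)$, and that in turn into the descent inequality from the first step, yields the master inequality \eqref{eq:one_step_master}.

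I expect the only genuine obstacle to be the bookkeeping of the $\epsilon$-smoothing: Lemma~\ref{lem:grad_mismatch} is stated for arbitrary token distributions and returns a possibly unbounded KL, whereas the target inequality is written with the bounded divergence $D_{\mathrm{KL}}^{(\alpha,\epsilon)}$, so I must verify that $p$, $q$, and the reference used in the SGD step $\theta_S^{+}=\theta_S-\eta\,g(p)$ are consistently the $\epsilon$-smoothed objects — this is what makes the KL produced by the lemma finite and equal to (or dominated by) $\bar D$, and hence what makes rigorous the qualitative conclusion that a larger \texttt{DOGe}-induced divergence $\bar D$ shrinks the guaranteed one-step decrease of the student's loss.
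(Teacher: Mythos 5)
Your proposal is correct and follows essentially the same route the paper itself uses (the paper's proof is in fact folded into the proposition statement): the descent lemma for the $L$-smooth map $\theta_S\mapsto\mathcal{L}_{\mathrm{KD}}(\theta_S;q)$ evaluated along the direction $g(p)$, the add-and-subtract plus Cauchy--Schwarz decomposition of $g(q)^\top g(p)$, and Lemma~\ref{lem:grad_mismatch} applied to the pair $(p,q)$. Your closing remark about the $\epsilon$-smoothing bookkeeping is well taken — the paper silently identifies $\mathbb{E}_t[\mathrm{KL}(p_t\Vert q_t)]$ with $\bar D=\mathbb{E}_t[\mathrm{KL}(p_t\Vert \mathrm{Smooth}_\epsilon(q_t))]$, and your fix (take the smoothed distributions as the references throughout, or invoke convexity of $\mathrm{KL}(p_t\Vert\cdot)$ together with linearity of $\mathrm{Smooth}_\epsilon$) is exactly what is needed to make that identification rigorous.
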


\begin{corollary}[Threshold on \texttt{DOGe} Divergence for Non-Improvement]
\label{cor:threshold}
The student's expected progress on the proxy-aligned objective $\mathcal{L}_{\mathrm{KD}}(\cdot;q)$ becomes non-negative (i.e., learning is stalled or reversed) if the average divergence $\bar{D}$ manipulated by \texttt{DOGe} satisfies
$\sqrt{\bar{D}}\;\ge\;\frac{\|g(q)\|}{G\sqrt{2}}\left(1 - \tfrac{L\eta\|g(p)\|^2}{2\|g(q)\|^2}\right)$. For small step sizes $\eta$, this simplifies to the condition that $\sqrt{\bar{D}}$ must exceed a threshold proportional to the norm of the ideal gradient $\|g(q)\|$.
\end{corollary}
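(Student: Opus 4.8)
The plan is to derive the corollary as a purely algebraic consequence of the master inequality~\eqref{eq:one_step_master} in Proposition~\ref{prop:one_step_appendix}; no new analytic input is needed, since the substantive work (the $L$-smoothness descent lemma, Cauchy--Schwarz, and the gradient-discrepancy bound of Lemma~\ref{lem:grad_mismatch}) is already done there. Recall that~\eqref{eq:one_step_master} upper-bounds the one-step change $\mathcal{L}_{\mathrm{KD}}(\theta_S^{+};q)-\mathcal{L}_{\mathrm{KD}}(\theta_S;q)$ by $-\eta\|g(q)\|^2 + \eta\|g(q)\|\,G\sqrt{2\bar D} + \tfrac{L}{2}\eta^2\|g(p)\|^2$. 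The one-step SGD guarantee of progress is exactly that this bound is strictly negative; ``non-improvement'' therefore means the bound has become non-negative, so I would take as the target condition $-\eta\|g(q)\|^2 + \eta\|g(q)\|\,G\sqrt{2\bar D} + \tfrac{L}{2}\eta^2\|g(p)\|^2 \ge 0$.

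Next I would divide through by $\eta\|g(q)\|>0$ — this is where the only hypothesis beyond Proposition~\ref{prop:one_step_appendix} enters, namely the mild non-degeneracy that the student has not already converged on the proxy objective, so that $\|g(q)\|\neq 0$ (if $\|g(q)\|=0$ there is nothing for \texttt{DOGe} to impede and the statement is vacuous). This yields $-\|g(q)\| + G\sqrt{2\bar D} + \tfrac{L\eta\|g(p)\|^2}{2\|g(q)\|}\ge 0$; isolating the divergence term gives $G\sqrt{2\bar D}\ge \|g(q)\| - \tfrac{L\eta\|g(p)\|^2}{2\|g(q)\|}$; and factoring $\|g(q)\|$ from the right-hand side and dividing by $G\sqrt 2$ produces exactly the stated threshold $\sqrt{\bar D}\ge \frac{\|g(q)\|}{G\sqrt2}\bigl(1-\tfrac{L\eta\|g(p)\|^2}{2\|g(q)\|^2}\bigr)$.

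For the ``small $\eta$'' reading I would note that the correction factor $1-\tfrac{L\eta\|g(p)\|^2}{2\|g(q)\|^2}$ lies in $(0,1]$ precisely when $\eta<2\|g(q)\|^2/(L\|g(p)\|^2)$ and tends to $1$ as $\eta\to0$, so in that regime the threshold reduces to $\sqrt{\bar D}\gtrsim\|g(q)\|/(G\sqrt2)$, i.e.\ it is proportional to the norm of the ideal proxy gradient, as claimed. I do not anticipate a genuine obstacle: the whole argument is the rearrangement of one already-established inequality. The only care needed is in the statement itself — recording the non-degeneracy assumption $\|g(q)\|>0$; being precise that the conclusion concerns an \emph{upper bound} on the loss decrease turning non-negative rather than asserting the loss strictly increases; and specifying the step-size regime in which the parenthetical factor is positive so that the ``threshold'' interpretation is meaningful.
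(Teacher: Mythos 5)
Your proposal is correct and follows exactly the route the paper intends: the corollary is the algebraic rearrangement of the master inequality~\eqref{eq:one_step_master}, obtained by setting its right-hand side to be non-negative, dividing by $\eta\|g(q)\|>0$, and isolating $\sqrt{\bar D}$ (the paper gives no separate proof, treating this as immediate). Your added caveats --- that $\|g(q)\|>0$ is needed, that the conclusion is really about the \emph{upper bound} on the loss decrease ceasing to be negative rather than the loss actually increasing, and that the parenthetical factor is only positive for $\eta < 2\|g(q)\|^2/(L\|g(p)\|^2)$ --- are accurate refinements of the paper's looser phrasing.
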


Corollary~\ref{cor:threshold} formalizes that once the divergence between the \texttt{DOGe} teacher $p$ and the proxy-averaged $q$ is sufficiently large, a student trained on $p$ makes no expected first-order progress on the objective it is meant to optimize (learning from $q$).

\subsection{Connecting DOGe's Objective to Masking}
The \texttt{DOGe} adversarial term is $\mathcal{L}_{\mathrm{adv}} = -\frac{1}{N}\sum_{i=1}^N\mathbb{E}_t\big[m_t\,D_{\mathrm{KL}}^{(\alpha,\epsilon)}(p_t\Vert q_{i,t})\big]$. Minimizing this is equivalent to maximizing the masked, proxy-averaged divergence. By convexity of KL, Jensen's inequality implies that maximizing this term also increases our analysis variable $\bar{D}$ on the masked (intermediate) positions that drive distillation. Simultaneously, $\mathcal{L}_{\mathrm{SFT}}$ keeps answer-region probabilities aligned with ground truth, bounding the unmasked portion of the divergence.

\subsection{Concluding the Justification for Proposition~\ref{prop:student_degradation}}
The argument proceeds as follows: (1) Assumption~\ref{ass:proxy_representativeness} posits that the proxy-averaged distribution $q$ is a good target for distillation. (2) DOGe's adversarial objective, when optimized, increases the divergence $\bar{D}$ between the teacher's output distribution $p$ and $q$ on intermediate reasoning tokens. (3) By Proposition~\ref{prop:one_step_appendix} and Corollary~\ref{cor:threshold}, once this divergence crosses a threshold, the resulting \texttt{DOGe} teacher impedes or reverses the distilled student's expected one-step progress on the distillation objective. (4) Aggregated over training, this leads to lower task performance for students distilled from $\mathcal{T}_{\theta_{\mathrm{final}}^*}$ than from a standard SFT teacher, thus justifying Proposition~\ref{prop:student_degradation}.

\paragraph{Scope and limitations.}
This justification is \emph{local} (analyzing one gradient step) and relies on standard assumptions of bounded gradients and smoothness. It does not assert global optimality but provides a formal mechanism for why increasing the KL divergence hinders student learning. The stability and effectiveness in practice depend on the trade-off parameters $\alpha, \epsilon, \lambda$, for which we report empirical ablations.


\section{Results of Using Tulu for Defensive Training} \label{app:tulu}

To further validate the generalizability of our approach across different defensive training datasets, we conduct additional experiments using the Tulu dataset~\citep{lambert2024tulu3}, which contains diverse general-purpose instruction-tuning data, instead of the math-specific GSM8K dataset used in our main results. Figure~\ref{fig:tulu-results} presents the comparative evaluation results when \texttt{DOGe} is trained on Tulu data. Consistent with our main findings in Section~\ref{sec:main_results}, we observe that defensive teachers maintain or improve their original performance while significantly degrading student model capabilities through knowledge distillation.

Notably, using the more diverse Tulu dataset for defensive training leads to \textbf{enhanced teacher performance improvement} compared to GSM8K-based training. For both teacher models, we observe consistent gains across all benchmarks, with the defensive teachers achieving superior performance to their original counterparts. However, the student performance degradation is \textbf{slightly less pronounced} than with GSM8K training, though still substantial (ranging from $-6.4\%$ to $-21.3\%$ across different benchmarks).

\begin{figure}[ht]
    \centering
    \includegraphics[width=0.9\linewidth]{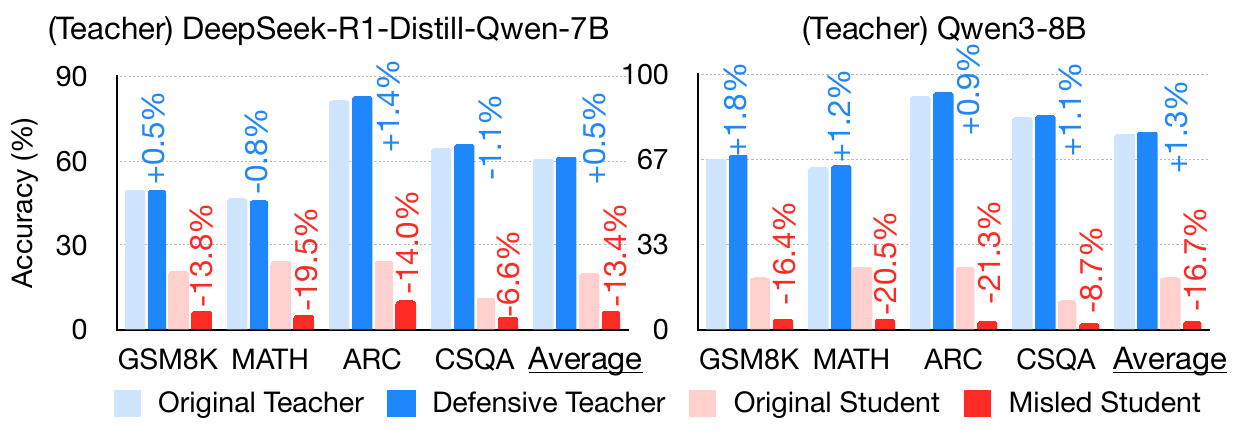}
    \caption{Comparative evaluation of \textit{defensive} \textit{v.s.} \textit{original} teacher models and \textit{misled} \textit{v.s.} \textit{original} student models using Tulu~(general) for defensive training. For the single proxy model used in defensive training, we employ \texttt{Qwen2.5-3B} as the teacher model (left), and \texttt{Qwen3-4B} as the teacher model (right). The student model is \texttt{Llama-3.2-1B}. We report the performance of: ($1$) \textit{Defensive} teacher trained with our proposed \texttt{DOGe} method; ($2$) Original teacher, the unmodified pre-trained model; ($3$) \textit{Misled} student, distilled from the \textit{defensive} teacher; and ($4$) Original student, the unmodified pre-trained student model. Compared to Section~\ref{sec:main_results}, our findings show that while \textit{defensive} teacher models \textbf{further improve} performance compared to using GSM8K~(math) for defensive training, \textit{misled} student models experience slightly less substantial performance degradation.}
    \label{fig:tulu-results}
\end{figure}

\section{LLM Judging intermediate Steps} \label{app:judge}

\subsection{Results}

In this experiment, we use \texttt{LLM-as-a-judge}~\citep{li2025generationjudgmentopportunitieschallenges,li2025preferenceleakagecontaminationproblem,zheng2023judgingllmasajudgemtbenchchatbot} to check if the intermediate thinking process generated by the teacher LLM is unnatural. Specifically, we employ \texttt{Gemini-1.5-Pro}~\citep{geminiteam2024gemini15unlockingmultimodal} to judge the generated text from \textit{defensive} teacher models as specified in Section~\ref{sec:method}. The results in Table~\ref{tab:unnatural_ratios} demonstrate that most of the outputs of the defensive teacher models with \texttt{DOGe} are unperceptible to the strong Gemini judge. This further validates the effectiveness of \texttt{DOGe} in compromising the performance of the distilled student model while maintaining the performance of the teacher model.

\begin{table}[ht]
\centering
\caption{Ratios of generated unnatural samples per dataset. Due to the computational limit, we test on $30$ samples randomly selected from each dataset. We then check the output manually to further validate the naturalness.}
\label{tab:unnatural_ratios}
\begin{tabular}{lr}
\toprule
\midrule
\textbf{Dataset} & \textbf{Ratio of Unnatural (\%) $\downarrow$} \\
\midrule
GSM8K & 22 \\
MATH & 31 \\
ARC & 19 \\
CSQA & 28 \\
\midrule
\bottomrule
\end{tabular}
\end{table}

\subsection{Prompt of Judge}

\begin{tcolorbox}[
  title=Thought Process Naturalness Prompt,
  colback=white,
  colframe=black,
  fonttitle=\bfseries,
  boxsep=1mm,
  arc=2mm,
  breakable
]
\textbf{You are an AI assistant evaluating the "naturalness" of a given thought process.}
A natural thought process should resemble how a human would typically think through a problem,
perhaps with some self‑correction, but not overly robotic, verbose, or excessively meticulous
about trivial steps.\\[1ex]

\textbf{Question:} Does the following thought process sound \emph{natural} or \emph{unnatural}?\\[1ex]

\textbf{Consider factors like:}
\begin{itemize}[leftmargin=*]
  \item Excessive verbosity for simple calculations.
  \item Redundant self‑correction or verification beyond what's typical.
  \item A robotic or overly formulaic tone.
  \item Explaining every single micro‑step in extreme detail.
\end{itemize}

\textbf{Respond with only the word "natural" or "unnatural."}\\[1ex]

\textbf{Thought Process:}\\
\texttt{---}\\
\texttt{\{thought\_process\_text\}}\\
\texttt{---}
\end{tcolorbox}

\section{Pseudo Code for \texttt{DOGe}}\label{app:algo}

\begin{algorithm}[H]
\caption{Defensive LM Head Training}
\label{alg:defensive_training}
\begin{algorithmic}[1]
\Require Teacher LLM $\mathcal{T}$ with frozen base and trainable LM head $L_{final}$ (parameters $\theta_{final}$)
\Require Training dataset $D_{train}$
\Require Ensemble of $N$ proxy student models $\{S_{proxy_i}\}_{i=1}^N$
\Require Hyperparameters: learning rate $\eta$, trade-off $\lambda$, number of epochs $E$, temperature $\alpha$
\State Initialize $\theta_{final}$ (e.g., from pre-trained $\mathcal{T}$)
\For{epoch $e = 1$ to $E$}
    \For{each batch $B = \{(x_j, y_{true_j})\}_{j=1}^{|B|} \subset D_{train}$}
        \State Compute teacher hidden states $h_j = \mathcal{T}_{base}(x_j)$
        \State Compute teacher output probabilities $P_{final_j} = \text{softmax}(L_{final}(h_j; \theta_{final}) / \tau)$ for each token position
        \State Calculate $\mathcal{L}_{SFT} = \frac{1}{|B|} \sum_j \sum_{t} \text{CrossEntropy}(P_{final_j,t}, y_{true_j,t})$
        \State Calculate $\mathcal{L}_{adv} = \frac{1}{|B|} \sum_j \sum_{t} \frac{1}{N} \sum_i \text{KL}(P_{final_j,t} \| P_{proxy_i}(x_j)_t)$
        \State Determine mask $m_{j,t}$ for each token $t$ in sequence $j$ based on Eq.~\eqref{eq:mask_definition}
        \State Compute total loss gradient $\nabla_{\theta_{final}} \mathcal{L}_{total}$ using $m_{j,t}$ as per Eq.~\eqref{eq:masked_gradient} for the adversarial component
        \State Update $\theta_{final} \leftarrow \theta_{final} - \eta \cdot \nabla_{\theta_{final}} \mathcal{L}_{total}$
    \EndFor
\EndFor
\State \textbf{return} Defensively trained LM head parameters $\theta_{final}^*$
\end{algorithmic}
\end{algorithm}

\section{Limitation} \label{app:limite}

First, \texttt{DOGe} requires additional defensive training on top of the original model, which introduces computational overhead and extends the deployment pipeline. Second, the trade-off parameter $\lambda$ is not straightforward to control and requires extensive hyperparameter search to achieve the optimal balance between teacher performance preservation and defense effectiveness. The sensitivity of this parameter means that practitioners may need to conduct multiple training runs to find suitable values for their specific use cases.

\section{Broader Impact} \label{app:impact}
Our work addresses the critical challenge of intellectual property protection for large language models. On the positive side, \texttt{DOGe} enables model developers and companies to better protect their substantial investments in LLM training and development, potentially encouraging continued innovation and research by providing stronger IP safeguards.

However, our approach also raises important considerations. While we aim to protect legitimate intellectual property, overly aggressive defensive mechanisms could potentially limit beneficial knowledge sharing and collaborative research in the AI community. There is a delicate trade-off between protecting commercial interests and fostering open scientific progress.

\section{Ethics Statement}
\label{app:ethics}

\paragraph{Purpose and intended use.}
This work studies \emph{anti-distillation} methods that make it harder to clone a proprietary teacher model via sequence-level knowledge distillation (KD), while preserving the teacher's utility for legitimate end-users. Our intended use is IP protection, abuse resistance (e.g., preventing the removal of safety guardrails via KD), and model stewardship in settings where the model owner is authorized to control downstream training on their outputs.

\paragraph{Dual-use and potential misuse.}
Like many security-style defenses, \textsc{DOGe} has dual-use potential.
It could be misused to (i) hinder reproducibility when applied to models intended for open research, (ii) create barriers to interoperability and competition, or (iii) degrade the transparency of intermediate reasoning traces.
We \emph{do not} advocate deploying this technique on community models or research artifacts meant to be freely distilled.
We recommend that organizations adopting \textsc{DOGe} also maintain a non-defensive checkpoint for bona fide research and comply with applicable antitrust, competition, and consumer-protection laws.

\paragraph{User impact and transparency.}
\textsc{DOGe} targets intermediate (``thinking'') tokens and is designed to preserve final-answer quality (utility preservation constraint).
However, intentionally making certain traces harder to learn can reduce apparent interpretability of generated rationales.
We recommend disclosing in system documentation that (i) intermediate traces may be altered for anti-distillation purposes, (ii) such traces are not suitable as training data, and (iii) a switch or header flag can disable the defensive head where transparency is required (e.g., education or auditing).

\paragraph{Safety, fairness, and bias.}
Altering token distributions could unintentionally change safety or fairness properties.
In our experiments, we propose evaluating standard toxicity, safety, and stereotype metrics before and after defense, and reporting group-wise deltas with confidence intervals.
If any degradation is detected, we recommend (a) tightening the reasoning mask, (b) reducing the defense weight $\lambda$, or (c) vetoing deployment.
Nothing in \textsc{DOGe} is designed to promote harmful content, and supervised fine-tuning (SFT) explicitly preserves task correctness; nevertheless, practitioners should \emph{re-validate} safety baselines when enabling the defense.

\paragraph{Privacy and data governance.}
All datasets used for training and evaluation should be publicly available under their respective licenses; private or sensitive data should not be distilled or re-exposed.
If logs are collected during evaluation, they must be filtered for personally identifiable information (PII) and handled according to organizational data-retention policies.
Model cards and data statements should accompany releases, including license terms that prohibit rebuilding models from outputs where applicable.

\paragraph{Release strategy.}
To balance reproducibility with risk, we recommend releasing: (i) code, evaluation harnesses, and ablation scripts; (ii) proxy-student configurations; and (iii) \emph{moderate-strength} defensive heads and checkpoints for research under a license that restricts malicious cloning.
We discourage releasing maximally aggressive heads without accompanying safety audits and clear use restrictions.

\paragraph{Human subjects and IRB.}
This research does not involve human subjects, user studies, or collection of sensitive personal data; hence no IRB approval was required. If future work includes human evaluation, it should obtain appropriate ethics approval and informed consent.

\section{The Use of Large Language Models (LLMs)}
To enhance clarity and readability, we utilized OpenAI GPT-5, Google Gemini 2.5-Pro, and Anthropic Claude Opus 4.1 exclusively as a language polishing tool. Its role was confined to proofreading, grammatical correction, and stylistic refinement—functions analogous to those provided by traditional grammar checkers and dictionaries. This tool did not contribute to the generation of new scientific content or ideas, and its usage is consistent with standard practices for scientific writing.

\stopcontents[app]

\end{document}